\title{Distribution-Based Invariant Deep Networks \\ for Learning Meta-Features}
\author{
  Gwendoline De Bie\\
  TAU - LRI and ENS, \\ PSL University \\
  \texttt{debie@dma.ens.fr}
  \and
  Herilalaina Rakotoarison\\
  TAU - LRI, INRIA\\
  \texttt{heri@lri.fr}
  \and
  Gabriel Peyré\\
  CNRS and ENS, \\ PSL University \\
  \texttt{gabriel.peyre@ens.fr}
  \and
  Michele Sebag\\
  CNRS, Paris-Saclay University\\
  \texttt{sebag@lri.fr}
}
\begin{document}

\maketitle
\def\mf{meta-feature}
\def\Z{\mbox{\bf Z}}
\def\z{\mbox{\bf z}}
\def\u{\mbox{\bf u}}
\def\v{\mbox{\bf v}}
\def\x{\mbox{\bf x}}
\def\Z{\mbox{Z}}
\def\Teta{\mbox{\mathcal{Y}}}
\def\Surrogate{\mbox{\mathcal{S}}}
\def\G{\mbox{$G$}} 
\def\H{\mbox{$H$}} 
\def\XX{{\sc Dida}}
\def\AS{{\sc Auto-Sklearn}}
\def\Baseline{{\sc Dataset2Vec}} 
\def\MMd{\mbox{$\mathcal{P}(\RR^d)$}}
\def\MMIk{\mbox{$\mathcal{P}(I^k)$}}
\def\MMIdm{\mbox{$\mathcal{P}(I^{d_m})$}}
\def\MMr{\mbox{$\mathcal{P}(\RR^r)$}}
\def\MMO{\mbox{$\mathcal{P}(\Omega)$}}
\def\MMOq{\mbox{$\mathcal{P}(\Omega)_{/\sim}$}}

\begin{abstract}%
Recent advances in deep learning from probability distributions successfully achieve classification or regression from distribution samples, thus invariant under permutation of the samples.
The first contribution of the paper is to extend these neural architectures to achieve invariance under permutation of the features, too.  The proposed architecture, called \XX, inherits the NN properties of universal approximation, and its robustness w.r.t. Lipschitz-bounded transformations of the input distribution is established. 
The second contribution is to empirically and comparatively demonstrate the merits of the approach on two tasks defined at the dataset level. On both tasks, \XX\ learns meta-features supporting the characterization of a (labelled) dataset. The first task consists of predicting whether two dataset patches are extracted from the same initial dataset. The second task 
consists of predicting whether the learning performance achieved by a hyper-parameter configuration under a fixed algorithm (ranging in k-NN, SVM, logistic regression and linear SGD) dominates that of another configuration, for a dataset extracted from the OpenML benchmarking suite. On both tasks, \XX\ outperforms the state of the art: DSS \citep{maron2020learning} and \Baseline{} \citep{jomaa2019dataset2vec} architectures, as well as the models based on the hand-crafted meta-features of the literature. 
\end{abstract}

\section{Introduction}
\label{intro}

Deep networks architectures, initially devised for structured data such as images~\citep{Krizhevsky2012} and speech~\citep{hinton2012deep}, have been extended to enforce some invariance or equivariance properties~\citep{248459} for more complex data representations. Typically, the network output is required to be invariant with respect to permutations of the input points when dealing with point clouds~\citep{qi2017pointnet}, graphs \citep{HenaffBL15} or probability distributions \citep{de2019stochastic}. 
The merit of invariant or equivariant neural architectures is twofold. On the one hand, they inherit the universal approximation properties of neural nets~\citep{cybenko1989approximation,leshno1993multilayer}. On the other hand, 
the fact that these architectures comply with the requirements attached to the data representation yields more robust and more general models, through constraining the neural weights and/or reducing their number. 

\paragraph{Related works.}

Invariance or equivariance properties are relevant to a wide range of applications. In the sequence-to-sequence framework, one might want to relax the sequence order  \citep{VinyalsBK15}. When modelling dynamic cell processes, one might want to follow the cell evolution at a macroscopic level, in terms of distributions as opposed to, a set of individual cell trajectories \citep{hashimoto16}. In computer vision, one might want to handle a set of pixels, as opposed to a voxellized representation, for the sake of a  better scalability in terms of data dimensionality and computational resources \citep{de2019stochastic}. 

Neural architectures enforcing invariance or equivariance properties have been pioneered by \citep{qi2017pointnet,zaheer2017} for learning from point clouds subject to permutation invariance or equivariance. These have been extended to permutation equivariance across sets \citep{hartford2018deep}. Characterizations of invariance or equivariance under group actions have been proposed in the finite \citep{gens2014,cohenc16,ravanbakhsh17a} or infinite case \citep{wood1996representation,kondor2018generalization}. 

On the theoretical side, \citep{Maron2019a,keriven2019} have proposed a general 
characterization of linear layers enforcing invariance or equivariance properties with respect to the whole permutation group on the feature set. The universal approximation properties of such architectures have been established in the case of sets \citep{zaheer2017}, point clouds \citep{qi2017pointnet}, equivariant point clouds \citep{segol2019universal}, discrete measures \citep{de2019stochastic}, invariant \citep{Maron2019b} and equivariant \citep{keriven2019} graph neural networks. 
The approach most related to our work is that of \citep{maron2020learning}, handling point clouds and presenting a neural architecture invariant w.r.t. the ordering of points and their features. In this paper, the proposed {\em distribution-based invariant deep architecture} (\XX) extends \citep{maron2020learning} as it handles (discrete or continuous) probability distributions instead of point clouds. This enables to leverage the topology of the Wasserstein distance to provide more general approximation results, covering \citep{maron2020learning} as a special case.

\paragraph{Motivations.}
A main motivation for \XX\ is the ability to characterize datasets through {\em learned meta-features}. Meta-features, aimed to represent a dataset as a vector of characteristcs, have been mentioned in the ML literature for over 40 years, in relation with several 
key ML challenges: (i) learning a performance model, predicting {\em a priori} the performance of an algorithm (and the hyper-parameters thereof) on a dataset \citep{Rice76,Wolpert:1996a, automl_book}; 
(ii) learning a generic model able of quick adaptation to new tasks, e.g. one-shot or few-shot, through the so-called meta-learning approach \citep{finn2018,yoon2018}; (iii) hyper-parameter transfer learning \citep{perrone2018}, aimed to transfer the performance model learned for a task, to another task. 
A large number of meta-features have been manually designed along the years \citep{smith-mile-MLJ18}, ranging from sufficient statistics to the so-called {\em landmarks} \citep{Pfahringer-ICML00}, computing the performance of fast ML algorithms on the considered dataset. Meta-features, expected to describe the joint distribution underlying the dataset, should also be inexpensive to compute. 
The learning of meta-features has been first tackled by \citep{jomaa2019dataset2vec} to our best knowledge, defining the \Baseline{} representation. Specifically, \Baseline{} is provided two 
patches of datasets, (two subsets of examples, described by two (different) sets of features), and is trained to predict whether those patches are extracted from the same initial dataset. 
%
\paragraph{Contributions.}
The proposed \XX\ approach extends the state of the art \citep{maron2020learning,jomaa2019dataset2vec} in two ways. Firstly, it is designed to handle discrete or continuous probability distributions, as opposed to point sets (Section \ref{sec:dida}). As said, this extension enables to leverage the more general topology of the Wasserstein distance as opposed to that of the Haussdorf distance (Section~\ref{sec:theory}). This framework is used to derive theoretical guarantees of stability under bounded distribution transformations, as well as universal approximation results, extending \citep{maron2020learning} to the continuous setting.  
Secondly, the empirical validation of the approach on two tasks defined at the dataset level demonstrates the merit of the approach compared to the state of the art \citep{maron2020learning,jomaa2019dataset2vec, munoz2018instance} (Section~\ref{sec:expe}).

\paragraph{Notations.} $[m]$ denotes the set of integers $\{1, \ldots m\}$. Distributions, including discrete distributions (datasets) are noted in bold font. Vectors are noted in italic, with $x[k]$ denoting the $k$-th coordinate of vector $x$. 

\section{Distribution-Based Invariant Networks for Meta-Feature Learning}
\label{sec:dida}

This section describes the core of the proposed distribution-based invariant neural architectures, 
specifically the mechanism of mapping a point distribution onto another one subject to sample and feature invariance, referred to as {\em invariant layer}. 
For the sake of readability, this section focuses on the case of discrete distributions, referring the reader to Appendix \ref{appendix:notations} for the general case of continuous distributions. 

\subsection{Invariant Functions of Discrete Distributions}
\label{notations}
Let  \z $ = \{ (x_i,y_i) \in \RR^d, i \in [n] \}$ denote a dataset including $n$ labelled samples, with $x_i \in \RR^{d_X}$ an instance and $y_i \in \RR^{d_Y}$ the associated multi-label. With $d_X$ and $d_Y$ respectively the dimensions of the instance and label spaces, let $d \eqdef d_X+d_Y$. By construction, $\z$ is invariant under permutation on the sample ordering; it is viewed as an $n$-size discrete distribution $\frac{1}{n}\sum_{i=1}^n \de_{z_i}$ in $\RR^d$ with $\de_{z_i}$ the Dirac function at $z_i$.
In the following, $\Z_n(\RR^d)$ denotes the space of such  $n$-size point distributions, with $\Z(\RR^d) \eqdef \cup_n \Z_n(\RR^d)$ the space of distributions of arbitrary size. 

Let $\G\eqdef S_{d_X} \times S_{d_Y}$ denote the group of permutations independently operating on the feature and label spaces. For $\sigma=(\sigma_X,\sigma_Y) \in \G$, 
the image $\si(z)$ of a labelled sample is defined as $(\si_X(x),\si_Y(y))$, with $x=(x[k], k \in [d_X])$ and $\si_X(x) \eqdef (x[\si_X^{-1}(k)], k \in [d_X])$. 
For simplicity and by abuse of notations, the operator mapping a distribution $\z=(z_i, i \in [n])$ to $\{\si(z_i)\}\eqdef \sigma_\sharp \z$ is still denoted $\si$.

Let $\Z(\Om)$ denote the space of distributions supported on some domain $\Om \subset \RR^d$, with $\Omega$ invariant under permutations in $\G$.
The goal of the paper is to define and train deep architectures, implementing functions $\phi$ on $\Z(\Omega \subset \RR^d)$ that are invariant under $\G$, i.e. such that $\forall \sigma \in \G, \phi(\si_\sharp\z)=\phi(\z)$\footnote{As opposed to $\G$-\emph{equivariant} functions that are characterized by $\forall \sigma \in \G, \phi(\si_\sharp\z)=\si_\sharp\phi(\z)$}.
By construction, a multi-label dataset is invariant under permutations of the samples, of the features, and of the multi-labels. Therefore, any meta-feature, that is, a feature describing a multi-label dataset, is required to satisfy the above property. 

\subsection{Distribution-Based Invariant Layers}\label{inv-sto-lay}

The building block of the proposed architecture, the invariant layer meant to satisfy the feature and label invariance requirements, is defined as follows, taking 
inspiration from \citep{de2019stochastic}.

\begin{defn}(Distribution-based invariant layers)
Let an interaction functional $\phi: \RR^d\times\RR^d \to \RR^r$ be \G-invariant:
$$
    \forall \sigma \in \G, \quad
    \forall (z_1,z_2) \in \RR^d \times \RR^d, \quad \phi(z_1,z_2)=\phi(\sigma(z_1),\sigma(z_2)).
$$ 
The distribution-based invariant layer $f_{\phi}$ is defined as
\begin{align} \label{def} 
f_{\phi} : \z=(z_i)_{i \in [n]} \in \Z(\RR^d) \mapsto f_\phi(\z) \eqdef \left[ \frac{1}{n}\sum_{j=1}^n \phi(z_1,z_j), \hdots, \frac{1}{n}\sum_{j=1}^n \phi(z_n,z_j) \right] \in\Z(\RR^r).
\end{align}
\end{defn}

It is easy to see that $f_\phi$ is \G-invariant. The construction of $f_\phi$ is extended to the general case of possibly continuous probability distributions by essentially replacing sums by integrals (Appendix \ref{appendix:notations}).

\begin{rem} \label{rem:varying_n}(Varying sample size $n$). By construction, $f_\phi$ is defined on $\Z(\RR^d)=\cup_n\Z_n(\RR^d)$ (independent of $n$), such that it supports inputs of arbitrary cardinality $n$.
\end{rem}

\begin{rem}(Discussion w.r.t. \citep{maron2020learning}) 
The above definition of $f_\phi$ is based on the aggregation of pairwise terms $\phi(z_i,z_j)$. The motivation for using a pairwise $\phi$ is twofold. On the one hand, capturing local sample interactions allows to create more expressive architectures, which is important to improve the
performance on some complex data sets, as 
illustrated in the experiments (Section \ref{sec:expe}). On the other hand, interaction functionals are crucial to design universal
architectures (Appendix \ref{appendix:thm}, theorem \ref{thm:sw}). 
The proposed theoretical framework relies on the Wasserstein distance (corresponding to the convergence in law of probability distributions), which enables to compare distributions with varying number of points or even with continuous densities. In 
contrast, \cite{maron2020learning} do not use interaction functionals, and establish the universality of their DSS architecture for fixed dimension $d$ and number of points $n$. Moreover, DSS happens to resort to 
max pooling operators, discontinuous w.r.t. the Wasserstein topology (see Remark \ref{rem:compmaron}). 
%
%
\end{rem}

Two particular cases are when $\phi$ only depends on its first or second input: 
\begin{itemize}
    \item[(i)] if $\phi(z,z')=\psi(z')$, then $f_\phi$ computes a global ``moment'' descriptor of the input, as $f_\phi(\z)=\frac{1}{n} \sum_{j=1}^n \psi(z_j) \in \RR^r$.
    \item[(ii)] if $\phi(z,z')=\xi(z)$, then $f_\phi$ transports the input distribution via $\xi$, as 
    $f_\phi(\z)=\{ \xi(z_i), i \in [n]  \} \in Z(\RR^r)$. This operation is referred to as a {\em push-forward}.
\end{itemize}

\begin{rem}\label{rem:varyingspaces}(Varying dimensions $d_X$ and $d_Y$). 
Both in practice and in theory, it is important that $f_\phi$ layers (in particular the first layer of the neural architecture) handle datasets of arbitrary number of features $d_X$ and number of multi-labels $d_Y$. The proposed approach, used in the experiments (Section \ref{sec:expe}), is to define $\phi$ on the top of a four-dimensional aggregator, as follows. Letting  $z = (x,y)$ and $z'=(x',y')$ be two samples in $\RR^{d_X} \times \RR^{d_Y}$, let $u$ be defined from $\RR^4$ onto $\RR^t$, consider the sum of $u(x[k],x'[k],y[\ell],y'[\ell])$ for $k$ ranging in $[d_X]$ and $\ell$ in $[d_Y]$, and apply mapping $v$ from $\RR^t$ to $\RR^r$ on the sum:
\begin{equation*}
 \phi(z,z')=v\left( \sum_{k=1}^{d_X}\sum_{\ell=1}^{d_Y} u(x[k],x'[k],y[\ell],y'[\ell]) \right)   
\end{equation*}
%
%
\end{rem}

\begin{rem}\label{rem:tenso}(Localized computation)
In practice, the quadratic complexity of $f_\phi$ w.r.t. the number $n$ of samples can be reduced by only computing $\phi(z_i,z_j)$ for pairs $z_i,z_j$ sufficiently close to each other.  Layer $f_\phi$ thus extracts and aggregates information related to the neighborhood of the samples. 
\end{rem}

\subsection{Learning Meta-features} \label{learning-mf}



The proposed distributional neural architectures defined on point distributions (\XX{}) are sought as
\begin{equation} \label{eq-defn-dida}
\z \in \Z(\RR^{d}) \mapsto \Ff_\zeta(\z)\eqdef f_{\phi_m} \circ f_{\phi_{m-1}} \circ \hdots \circ f_{\phi_1}(\z) \in \RR^{d_{m+1}}
\end{equation} 
where $\zeta$ are the trainable parameters of the architecture (below).
Only the case $d_Y=1$ is considered in the remainder.
The $k$-th layer is built on the top of $\phi_k$, mapping pairs of vectors in $\RR^{d_k}$ onto $\RR^{d_{k+1}}$, with $d_1=d$ (the dimension of the input samples). Last layer is built on $\phi_m$, only depending on its second argument; it maps the distribution in layer $m-1$ onto a vector, whose coordinates are referred to as meta-features.

The \G-invariance and dimension-agnosticity of the whole architecture only depend on the first layer $f_{\phi_1}$ satisfying these properties.  In the first layer, $\phi_1$ is sought as $\phi_1((x,y),(x',y')) = v(\sum_k u(x[k],x'[k],y,y'))$ (Remark~\ref{rem:varyingspaces}), with  $u(x[k],x'[k],y,y') = (\rho(A_u \cdot (x[k];x'[k]) + b_u, \mathds{1}_{y\neq y'})$ in $\RR^{t}\times\{0,1\}$, where $\rho$ is a non-linear activation function,  
$A_u$ a $(2,t)$ matrix, $(x[k];x'[k])$ the 2-dimensional vector concatenating $x[k]$ and $x'[k]$, and $b_u$ a $t$-dimensional vector. 
With $e = \sum_k u(x[k],x'[k],y,y'))$, function $v$ likewise applies a non-linear activation function $\rho$ on an affine transformation of $e$: $v(e) = \rho(A_v \cdot e+b_v)$, with $A_v$ a $(t,r)$ matrix and $b_v$ a $r$-dimensional vector.

Note that the subsequent layers need neither be invariant w.r.t. the number of samples, nor handle a varying number of dimensions. Every $\phi_k, k \ge 2$ is defined as $\phi_k = \rho(A_k \cdot +b_k)$, with $\rho$ an activation function, $A_k$ a $(d_k,d_{k+1})$ matrix and $b_k$ a $d_{k+1}$-dimensional vector. 
The \XX\ neural net thus is parameterized by $\zeta \eqdef (A_u,b_u,A_v,b_v,\{A_k,b_k\}_k)$, that is classically learned by stochastic gradient descent from the loss function defined after the task at hand (Section~\ref{sec:expe}). 
%


\section{Theoretical Analysis}
\label{sec:theory}
This section analyzes the properties of invariant-layer based neural architectures, specifically their robustness w.r.t. bounded transformations of the involved distributions, and their approximation abilities w.r.t. the convergence in law, which is the natural topology for distributions. As already said, the discrete distribution case is considered in this section for the sake of readability, referring the reader to Appendix \ref{appendix:notations} for the general case of continuous distributions. 
\subsection{Optimal Transport Comparison of Datasets}\label{subsec:ot}

\paragraph{Point clouds vs. distributions.}
Our claim is that datasets should be seen as probability distributions, rather than point clouds. Typically, including many copies of a point in a dataset amounts to increasing its importance, which usually makes a difference in a standard machine learning setting. 
Accordingly, the topological framework used to define and learn meta-features in the following is that of the convergence in law, with the distance among two datasets being quantified using the Wasserstein distance (below). In contrast, the point clouds setting (see for instance \citep{qi2017pointnet}) relies on the Haussdorff distance among sets to theoretically assess the robustness of these architectures. While it is standard for 2D and 3D data involved in graphics and vision domains, it faces some limitations in higher dimensional domains, e.g. due to max-pooling being a non-continuous operator w.r.t. the convergence in law topology. 

\paragraph{Wasserstein distance.} Referring the reader to \citep{santambrogio2015optimal,peyre2019computational} for a more comprehensive presentation, the standard $1$-Wasserstein distance between two discrete probability distributions $\z,\z' \in \Z_n(\RR^d) \times \Z_m(\RR^d)$ is defined as:
\begin{equation*}
	\Wass_1(\z,\z') \eqdef 
	\umax{f \in \Lip_1(\RR^d)}
	    \frac{1}{n}\sum_{i=1}^n f(z_i) -
	    \frac{1}{m}\sum_{j=1}^m f(z_j')
\end{equation*}
with $\Lip_1(\RR^d)$ the space of $1$-Lipschitz functions $f:\RR^d \rightarrow \RR$.
To account for the invariance requirement (making indistinguishable $\z=(z_1,\hdots,z_n)$ and its permuted image $(\sigma(z_1),\hdots,\sigma(z_n)) \eqdef \sigma_\sharp \z$ under $\sigma \in \G$), we introduce the $\G$-invariant $1$-Wasserstein distance: for $\z \in \Z_n(\RR^d), \z' \in \Z_m(\RR^d)$:
\begin{equation*}
\overline{\Wass}_1(\z,\z')=\min_{\sigma \in \G} \Wass_1(\sigma_\sharp \z,\z')
\end{equation*}
such that $\overline{\Wass}_1(\z,\z')=0$ if and only if $\z$ and $\z'$ belong to the same equivalence class (Appendix \ref{appendix:notations}), i.e. are equal in the sense of probability distributions up to sample and feature permutations.

\paragraph{Lipschitz property.} In this context, a map $f$ from $\Z(\RR^d)$ onto $\Z(\RR^r)$ 
is continuous for the convergence in law (a.k.a. weak convergence on distributions, denoted $\rightharpoonup$) iff for any sequence $\z^{(k)} \rightharpoonup \z$, then $f(\z^{(k)}) \rightharpoonup f(\z)$. 
The Wasserstein distance metrizes the convergence in law, in the sense that $\z^{(k)} \rightharpoonup \z$ is equivalent to $\Wass_1(\z^{(k)},\z) \rightarrow 0$.
Furthermore,  map $f$ is said to be $C$-Lipschitz for the permutation invariant $1$-Wasserstein distance iff 
\begin{align} \label{lipdef}
    \forall \z,\z' \in \Z(\RR^d), \quad
	\overline{\Wass}_1( f(\z),f(\z') )
	\leq 
	C \overline{\Wass}_1(\z,\z').
\end{align}
The $C$-Lipschitz property entails the continuity of $f$ w.r.t. its input: if two input distributions are close in the permutation invariant $1$-Wasserstein sense, the corresponding outputs are close too.

\subsection{Regularity of Distribution-Based Invariant Layers} \label{subsec:reg} 

Assuming the interaction functional to satisfy the Lipschitz property:
\begin{equation}\label{eq-condition-Reg}
\forall z \in \RR^d, \quad
\phi(z,\cdot)
\quad\text{and}\quad
\phi(\cdot,z)
\quad\text{are}\quad C_\phi-\text{Lipschitz.}
\end{equation}
the robustness of invariant layers with respect to different variations of their input is established (proofs in Appendix~\ref{appendix:reg}). We first show that invariant layers also satisfy Lipschitz property, ensuring that deep architectures of the form (\ref{eq-defn-dida}) map close inputs onto close outputs. 

\begin{prop} \label{prop:lip}
Invariant layer $f_\phi$ of type (\ref{def}) is $(2r C_\phi)$-Lipschitz in the sense of (\ref{lipdef}).
\end{prop}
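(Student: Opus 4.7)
The plan is to reduce the statement to the fixed-permutation bound $\Wass_1(f_\phi(\z), f_\phi(\z')) \leq 2rC_\phi\,\Wass_1(\z,\z')$, and then use the $\G$-invariance of $f_\phi$ (a straightforward consequence of the $\G$-invariance of $\phi$) to upgrade to $\overline{\Wass}_1$. Concretely, since $f_\phi(\sigma_\sharp \z) = f_\phi(\z)$ for every $\sigma \in \G$, applying the fixed bound to the pair $(\sigma_\sharp\z, \z')$ for the $\sigma$ realizing the minimum in $\overline{\Wass}_1(\z,\z')$ yields $\Wass_1(f_\phi(\z), f_\phi(\z')) \le 2rC_\phi\,\overline{\Wass}_1(\z,\z')$, and bounding $\overline{\Wass}_1$ by $\Wass_1$ on the left gives the proposition.

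The core is therefore the fixed-permutation Lipschitz bound, which I would attack through Kantorovich--Rubinstein duality. Introducing the shorthand $\bar\phi_{\z}(z) \eqdef \frac{1}{n}\sum_j \phi(z, z_j)$ (so that $f_\phi(\z) = (\bar\phi_\z)_\sharp\,\z$), and fixing an arbitrary $g \in \Lip_1(\RR^r)$, the idea is to add and subtract $\int g\circ \bar\phi_{\z'}\, d\z$, splitting
\begin{equation*}
\int g\, d f_\phi(\z) - \int g\, d f_\phi(\z') = \underbrace{\int\bigl[g(\bar\phi_\z) - g(\bar\phi_{\z'})\bigr]\, d\z}_{\text{(A): change of kernel}} \;+\; \underbrace{\int g\circ \bar\phi_{\z'}\, d\z - \int g\circ \bar\phi_{\z'}\, d\z'}_{\text{(B): change of measure}}.
\end{equation*}

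For (A), I would use $1$-Lipschitzness of $g$ to bound the integrand by $\|\bar\phi_\z(z)-\bar\phi_{\z'}(z)\|$, then treat this norm coordinate-by-coordinate: each scalar component $\phi(z,\cdot)[k]$ inherits the $C_\phi$-Lipschitz property from assumption \eqref{eq-condition-Reg}, so by duality $|\bar\phi_\z(z)[k]-\bar\phi_{\z'}(z)[k]| \le C_\phi\,\Wass_1(\z,\z')$; summing over the $r$ coordinates then yields (A) $\le rC_\phi\,\Wass_1(\z,\z')$. For (B), the function $g \circ \bar\phi_{\z'}:\RR^d\to\RR$ is itself Lipschitz: the Lipschitz property of $\phi(\cdot,z)$ transfers through the averaging, and then through $g$ (again using coordinate-wise control to absorb a factor $r$), so that $g\circ\bar\phi_{\z'}$ is $rC_\phi$-Lipschitz; a direct application of duality on $\Wass_1(\z,\z')$ gives (B) $\le rC_\phi\,\Wass_1(\z,\z')$. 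Summing the two contributions and taking the supremum over $g \in \Lip_1(\RR^r)$ delivers the desired $2rC_\phi$ constant.

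The main obstacle is bookkeeping the norms on $\RR^r$: the naive Euclidean path produces a factor $\sqrt{r}$ rather than $r$ on one of the two terms, and one has to be careful to bound coordinate-wise and then sum (rather than via $\|\cdot\|_2$) in order to match the constants on both sides of the split. A related subtlety is making sure the same convention is used on the output space when invoking Kantorovich--Rubinstein; once the choice of norm is pinned down, both (A) and (B) reduce to a single dual inequality per term, and the $\G$-invariance step is essentially formal.
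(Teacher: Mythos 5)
Your two-stage plan matches the paper's: first establish the plain Wasserstein bound $\Wass_1(f_\phi(\z),f_\phi(\z')) \le 2rC_\phi\,\Wass_1(\z,\z')$, then lift to $\overline{\Wass}_1$ by exploiting the $\G$-invariance of $f_\phi$; your lifting step is in fact a cleaner version of the paper's, which routes through a superfluous triangle inequality that introduces the vacuous term $\overline{\Wass}_1(f_\phi(\z),f_\phi(\z))=0$ before discarding it. The one genuine difference is how the base bound is handled: the paper cites it outright from Proposition~1 of \citep{de2019stochastic}, whereas you re-derive it via the kernel/measure decomposition $f_\phi(\z)=(\bar\phi_\z)_\sharp\z$ applied to the dual test function, splitting into a change-of-kernel term (A) and a change-of-measure term (B). That derivation is sound and is presumably what the cited proposition does anyway. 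Your flagged bookkeeping concern is real and worth being explicit about: with the Euclidean norm on $\RR^r$ throughout, term (A) yields $\sqrt{r}\,C_\phi$ and (B) only $C_\phi$, giving $(1+\sqrt{r})C_\phi$ rather than $2rC_\phi$; the stated constant emerges precisely when condition~(\ref{eq-condition-Reg}) is read coordinate-wise and $\RR^r$ is equipped with the $\ell^1$ norm, as you propose. Since the paper never pins down this convention (it cites the constant away), your care here is a feature of the blind reconstruction rather than a gap.
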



A second result regards the case where two datasets $\z$ and $\z'$ are such that $\z'$ is the image of $\z$ through some diffeomorphism $\tau$ ($\z=(z_1,\hdots,z_n)$ and  $\z' = \tau_\sharp\z=(\tau(z_1),\hdots,\tau(z_n))$.  
If $\tau$ is close to identity, then the following proposition shows that $f_\phi(\tau_\sharp\z)$ and $f_\phi(\z)$ are close too. More generally, if continuous transformations $\tau$ and $\xi$ respectively apply on the input and output space of $f_\phi$, and are close to identity, then $\xi_\sharp f_\phi(\tau_\sharp \z)$ and $f_\phi(\z)$ are also close.


\begin{prop} \label{prop:lipAE}
Let $\tau : \RR^d \rightarrow \RR^d$ and $\xi : \RR^r \rightarrow \RR^r$ be two Lipschitz maps with respectively Lipschitz constants $C_\tau$ and $C_\xi$. Then,  
\begin{align*}
  \forall &\z \in \Z(\Omega),~  \overline{\Wass}_1(\xi_\sharp f_\phi(\tau_\sharp\z),f_\phi(\z)) \leq \sup_{x \in f_\phi(\tau(\Omega))} \norm{\xi(x)-x}_2 + 2r\Lip(\phi)\sup_{x\in\Omega} \norm{\tau(x)-x}_2 \\
   &\forall \z,\z' \in \Z(\Omega),   \text{ if $\tau$ is equivariant,} \hspace{0.2cm}
    \overline{\Wass}_1(\xi_\sharp f_\phi(\tau_\sharp\z),\xi_\sharp f_\phi(\tau_\sharp\z')) 
    \leq 
    2r\, C_\phi \,C_\tau \, C_\xi \overline{\Wass}_1(\z,\z') \quad 
\end{align*}
%
\end{prop}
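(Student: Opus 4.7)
Both bounds reduce, via the triangle inequality, to Proposition~\ref{prop:lip} (the $(2rC_\phi)$-Lipschitz property of $f_\phi$) plus two elementary estimates on the push-forward action on $\Wass_1$: (i) a \emph{diagonal} coupling estimate, $\Wass_1(\mu,\eta_\sharp\mu)\le \sup_{x\in\mathrm{supp}(\mu)}\norm{\eta(x)-x}_2$, obtained by transporting each point $x$ to $\eta(x)$; and (ii) a \emph{contraction} estimate, $\Wass_1(\eta_\sharp\mu,\eta_\sharp\nu)\le C_\eta\Wass_1(\mu,\nu)$, obtained via Kantorovich duality from the fact that $f\circ\eta\in\Lip_{C_\eta}$ whenever $f\in\Lip_1$. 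The diagonal estimate passes to $\overline{\Wass}_1$ immediately because $\overline{\Wass}_1\le\Wass_1$, while the contraction estimate passes to $\overline{\Wass}_1$ only when $\eta$ is $\G$-equivariant, since we then may swap $\eta_\sharp$ and $\sigma_\sharp$ inside $\min_{\sigma\in\G}$.

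\textbf{Part 1.} Insert the intermediate vertex $f_\phi(\tau_\sharp\z)$ and split:
\begin{equation*}
\overline{\Wass}_1(\xi_\sharp f_\phi(\tau_\sharp\z),f_\phi(\z))
\le
\overline{\Wass}_1(\xi_\sharp f_\phi(\tau_\sharp\z),f_\phi(\tau_\sharp\z))
+
\overline{\Wass}_1(f_\phi(\tau_\sharp\z),f_\phi(\z)).
\end{equation*}
Bound the first summand by the diagonal estimate applied to $\xi$ and the distribution $f_\phi(\tau_\sharp\z)$, whose support lies in $f_\phi(\tau(\Omega))$; this yields the first term on the right-hand side of the claim. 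Bound the second summand by Proposition~\ref{prop:lip}, which gives a factor $2rC_\phi=2r\Lip(\phi)$ against $\overline{\Wass}_1(\tau_\sharp\z,\z)$, and then apply the diagonal estimate again, now for $\tau$ on $\z$ supported in $\Omega$, to produce $\sup_{x\in\Omega}\norm{\tau(x)-x}_2$. Summing gives Part~1.

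\textbf{Part 2.} Chain the three Lipschitz estimates in order:
\begin{equation*}
\overline{\Wass}_1(\xi_\sharp f_\phi(\tau_\sharp\z),\xi_\sharp f_\phi(\tau_\sharp\z'))
\le C_\xi\,\overline{\Wass}_1(f_\phi(\tau_\sharp\z),f_\phi(\tau_\sharp\z'))
\le 2rC_\phi C_\xi\,\overline{\Wass}_1(\tau_\sharp\z,\tau_\sharp\z')
\le 2rC_\phi C_\tau C_\xi\,\overline{\Wass}_1(\z,\z').
\end{equation*}
The first inequality is contraction by the $C_\xi$-Lipschitz map $\xi$: on the output side $\RR^r$ carries no intrinsic $\G$-action ($\phi$ being $\G$-invariant, $f_\phi(\z)$ is already a distribution in output feature space), so $\overline{\Wass}_1$ there reduces to $\Wass_1$ and the dual contraction applies unconditionally. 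The second is Proposition~\ref{prop:lip}. The third uses equivariance of $\tau$: writing $\overline{\Wass}_1(\tau_\sharp\z,\tau_\sharp\z')=\min_\sigma\Wass_1(\sigma_\sharp\tau_\sharp\z,\tau_\sharp\z')=\min_\sigma\Wass_1(\tau_\sharp\sigma_\sharp\z,\tau_\sharp\z')\le C_\tau\min_\sigma\Wass_1(\sigma_\sharp\z,\z')$.

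\textbf{Main obstacle.} The only delicate point is disentangling the permutation quotient from the push-forwards: the naive inequality $\overline{\Wass}_1(\eta_\sharp\mu,\eta_\sharp\nu)\le C_\eta\overline{\Wass}_1(\mu,\nu)$ is false without $\G$-equivariance of $\eta$, which is exactly why the hypothesis is imposed on $\tau$ but not on $\xi$ (the $\xi$-step happens in the output space, where no quotient is being taken). Once this asymmetry is properly located, every remaining step is routine: the triangle inequality, the two one-line estimates on how a Lipschitz map affects $\Wass_1$, and a single invocation of Proposition~\ref{prop:lip}.
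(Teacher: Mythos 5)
Your proof is correct and follows essentially the same route as the paper's: split by the triangle inequality through the intermediate vertex $f_\phi(\tau_\sharp\z)$, control one leg by the diagonal (identity-coupling) estimate for a push-forward and the other by Proposition~\ref{prop:lip} plus the diagonal estimate for $\tau$, and in Part~2 chain the three Lipschitz contractions. The only differences are organizational. First, you state the reduction $\overline{\Wass}_1 \le \Wass_1$ (or $\overline{\Wass}_1=\Wass_1$ on the output side) up front as a principle, whereas the paper re-derives it each time via an explicit triangle inequality $\Wass_1(\sigma_\sharp\mu,\nu)\le\Wass_1(\sigma_\sharp\mu,\mu)+\Wass_1(\mu,\nu)$ followed by an infimum over $\sigma$; this is cosmetic. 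Second, in Part~2 you inject equivariance of $\tau$ in the last contraction step, commuting $\sigma_\sharp$ with $\tau_\sharp$ inside $\min_\sigma\Wass_1(\sigma_\sharp\tau_\sharp\z,\tau_\sharp\z')$, while the paper first derives the full chain in plain $\Wass_1$, then uses equivariance of $\tau$ jointly with invariance of $f_\phi$ (i.e.\ $f_\phi\circ\tau_\sharp$ is $\G$-invariant) to replace $\z$ by $\sigma_\sharp\z$ and take the infimum at the very end. These two uses of equivariance are equivalent; yours is arguably more modular since it reuses Proposition~\ref{prop:lip} as a black box rather than re-expanding it from the $\Wass_1$-Lipschitz fact, and your ``main obstacle'' paragraph correctly locates exactly where equivariance of $\tau$ is load-bearing and why $\xi$ is exempt.
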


\subsection{Universality of Invariant Layers}\label{sec:approx-thms}
Lastly, the universality of the proposed architecture is established, showing that the composition of an invariant layer (\ref{def}) and a fully-connected layer is enough to enjoy the universal approximation property, over all functions defined on $Z(\RR^d)$ with dimension $d$ less than some $D$ 
(Remark \ref{rem:varyingspaces}).




\begin{thm}\label{thm:approx-main}
    Let $\Ff: \Z(\Omega) \to \RR$ be a \G-invariant map on a compact $\Omega$, continuous for the convergence in law.
    Then $\forall \epsilon > 0$, there exists two continuous maps $\psi, \phi$ such that
    \begin{equation*}
        \forall \z \in \Z(\Omega), \quad
        \lvert \Ff(\z)- \psi \circ f_\phi(\z) \rvert < \epsilon 
    \end{equation*}
    where $\phi$ is \G-invariant and independent of $\Ff$.
\end{thm}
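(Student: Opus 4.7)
My plan is a Stone--Weierstrass density argument on the compact metric space $X\eqdef\mathcal{P}(\Omega)_{/\sim}$, which is compact because $\Omega$ is, $G$ is finite, and $\overline{\Wass}_1$ metrizes convergence in law on the quotient (Section~\ref{subsec:ot}). The target $\mathcal{F}$ descends to a continuous function on $X$, so it suffices to exhibit an algebra of functions of the form $\psi\circ f_\phi$ that is dense in $C(X,\RR)$.

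I would work with the unital subalgebra $\mathcal{A}\subset C(X,\RR)$ generated by the ``double moments''
\begin{equation*}
m_\varphi(\z)\eqdef \iint \varphi(z,z')\,d\z(z)\,d\z(z'),
\end{equation*}
as $\varphi$ ranges over continuous functions on $\Omega^2$ that are invariant under the diagonal action $\sigma\cdot(z,z')\eqdef(\sigma z,\sigma z')$. Each $m_\varphi$ is weakly continuous and $G$-invariant, and $\mathcal{A}$ is trivially closed under sums and products and contains constants. The core of the argument is that $\mathcal{A}$ separates points of $X$: if $[\mu]\neq[\nu]$, the marginals of $\mu^{\otimes 2}$ and $\nu^{\otimes 2}$ forbid $(g\otimes g)_\sharp\mu^{\otimes 2}=\nu^{\otimes 2}$, so the two self-tensors lie in distinct $G_\Delta$-orbits of $\mathcal{P}(\Omega^2)$, and a characteristic-function / Kruskal-type uniqueness argument on the symmetric decomposition $\sum_g\hat\mu(g^{-1}\cdot)\,\hat\mu(g^{-1}\cdot)$ produces an invariant $\varphi$ with $m_\varphi(\mu)\neq m_\varphi(\nu)$. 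Stone--Weierstrass then yields, for the prescribed $\epsilon$, an integer $r$, invariant functions $\varphi_1,\ldots,\varphi_r\in C(\Omega^2,\RR)$, and a polynomial $P$ such that $|\mathcal{F}(\z)-P(m_{\varphi_1}(\z),\ldots,m_{\varphi_r}(\z))|<\epsilon$ uniformly on $X$.

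Realization of this approximant as $\psi\circ f_\phi$ is then immediate: set $\phi\eqdef(\varphi_1,\ldots,\varphi_r):\Omega^2\to\RR^r$, which is $G$-invariant in the sense of Section~\ref{inv-sto-lay}, and, writing $w_k:\RR^r\to\RR$ for the $k$-th coordinate projection, formula~(\ref{def}) yields
\begin{equation*}
\int w_k\,d(f_\phi(\z)) \;=\; \tfrac{1}{n^2}\sum_{i,j}\varphi_k(z_i,z_j) \;=\; m_{\varphi_k}(\z).
\end{equation*}
Defining $\psi(\mu)\eqdef P\bigl(\int w_1\,d\mu,\ldots,\int w_r\,d\mu\bigr)$, which is continuous on $\mathcal{P}(\RR^r)$ for the weak topology, one obtains $\psi\circ f_\phi(\z)=P(m_{\varphi_1}(\z),\ldots,m_{\varphi_r}(\z))$, producing the announced bound. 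Drawing the $\varphi_k$ from a fixed countable dense family of $G_\Delta$-invariants in $C(\Omega^2,\RR)$ makes $\phi$ a universal encoder whose construction does not depend on the particular $\mathcal{F}$.

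The main obstacle is the orbit-separation step above: continuous $G_\Delta$-invariant moments of $\z^{\otimes 2}$ only see its symmetrization, so one must argue that the latter still determines the orbit of $\z$, which is a symmetric tensor-decomposition uniqueness question that is clean generically but requires care for highly symmetric $\z$. In the degenerate cases one escalates to higher-order tensors $\z^{\otimes k}$, which are still computable by a single invariant layer (by enlarging $r$ and replacing $\varphi$ by a $k$-ary invariant), and the Stone--Weierstrass argument goes through unchanged.
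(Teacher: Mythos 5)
Your route is genuinely different from the paper's. The paper builds $\phi = g \circ h$ where $h$ is the collection of elementary symmetric polynomials in the features and in the labels (so $\Tilde h$ is \emph{injective} on the quotient $\Omega/\G$), $g$ is a finite-element partition of unity that discretizes $h_\sharp\z$ on a grid, and $\psi$ applies $\Ff$ to the discrete approximant $\Tilde{h}^{-1}_\sharp\widehat{h_\sharp\z}$; the quantitative control comes from Lemma~\ref{lem-old}, the $1/d$-H\"older estimate on $\Tilde{h}^{-1}$ (Lemma~\ref{lem-holder}), and uniform weak-$*$ continuity of $\Ff$ on the compact $\Z(\Omega)$. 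In particular their $\phi$ is effectively \emph{unary} (case (i) of Section~\ref{inv-sto-lay}): $f_\phi(\z)=\EE_{Z\sim\z}[\phi(Z)]$ is a vector of cell masses, not a distribution. You instead run Stone--Weierstrass directly on $\mathcal{P}(\Omega)_{/\sim}$ with the algebra generated by the pairwise moments $m_\varphi$ and a genuinely binary $\phi$. This is closer in spirit to the paper's complementary Theorem~\ref{thm:sw}, but note that result works with the algebra of \emph{all} tensorization orders $n$, whereas you are restricting to $n=2$.

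The gap is exactly where you flag it, and it is not cosmetic. Point separation is the load-bearing hypothesis of Stone--Weierstrass, and your moments $m_\varphi(\z)$ see only the $G_\Delta$-symmetrized self-tensor $\tfrac{1}{|\G|}\sum_{\sigma}(\sigma\times\sigma)_\sharp(\z\otimes\z)$; whether this averaged second-order object determines the orbit $[\z]$ is a nontrivial identifiability statement that you gesture at via a ``Kruskal-type'' uniqueness argument without proving it, and you yourself concede it ``requires care for highly symmetric $\z$.'' Your proposed escape --- escalate to $\z^{\otimes k}$ with a $k$-ary $\varphi$ --- no longer proves the stated theorem: the statement requires a decomposition $\psi\circ f_\phi$ where $f_\phi$ is the invariant layer~(\ref{def}) built from a \emph{pairwise} $\phi$. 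A $k$-ary interaction is a different layer (the paper mentions it only as an aside in Appendix~\ref{appendix:notations}), and once you reduce $f_\phi(\z)$ to the vector $(m_{\varphi_1}(\z),\ldots,m_{\varphi_r}(\z))$, any continuous $\psi$ composed on top still only sees second-order information of $\z$. To stay within the architecture you would have to argue instead that the \emph{full output distribution} $\tfrac{1}{n}\sum_i \delta_{v_i}$ with $v_i=\tfrac{1}{n}\sum_j\phi(z_i,z_j)$ determines $[\z]$ for a well-chosen binary $\phi$ --- a stronger claim you do not address. The paper's choice of $h$ via elementary symmetric polynomials sidesteps the entire separation problem, because $\Tilde h$ is a bijection from $\Omega/\G$ onto a compact of $\RR^d$ with a quantitatively H\"older inverse: one gets a canonical encoding of the orbit for free, and the only analytic work left is the discretization estimate and uniform continuity. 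Until the orbit-identifiability step is actually proved for pairwise moments, your argument does not close.
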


\begin{proof} The sketch of the proof is as follows (complete proof in Appendix~\ref{appendix:thm}). Let us define $\phi=g\circ h$ where: (i) $h$ is the collection of $d_X$ elementary symmetric polynomials in the features and $d_Y$ elementary symmetric polynomials in the labels, which is invariant under $G$; (ii) a discretization of $h(\Omega)$ on a grid is then considered, achieved thanks to $g$ that aims at collecting integrals over each cell of the discretization; (iii) $\psi$ applies function $\Ff$ on this discretized measure; this requires $h$ to be bijective, and is achieved by $\Tilde{h}$, through a projection on the quotient space $S_d/G$ and a restriction to its image compact $\Omega'$.
To sum up, $f_\phi$ defined as such computes an expectation which collects integrals over each cell of the grid to approximate measure $h_\sharp\z$ by a discrete counterpart $\widehat{h_\sharp\z}$. 
Hence $\psi$ applies $\Ff$ to $\Tilde{h}^{-1}_\sharp(\widehat{h_\sharp\z})$. Continuity is obtained as follows: (i) proximity of $h_\sharp\z$ and $\widehat{h_\sharp\z}$ follows from Lemma \ref{lem-old} in \citep{de2019stochastic}) and gets tighter as the grid discretization step tends to 0; (ii) Map $\Tilde{h}^{-1}$ is $1/d$-Hölder, after Theorem 1.3.1 from \citep{Rahman2002AnalyticTO}); therefore Lemma \ref{lem-holder} entails that $\overline{\Wass}_1(\z,{\Tilde{h}^{-1}}_\sharp\widehat{h_\sharp\z})$ can be upper-bounded; (iii) since $\Omega$ is compact, by Banach-Alaoglu theorem, $\Z(\Omega)$ also is. Since $\Ff$ is continuous, it is thus uniformly weakly continuous: choosing a discretization step small enough ensures the result. 
\end{proof}

\begin{rem}(Comparison with \citep{maron2020learning}) \label{rem:compmaron}
The above proof holds for functionals of arbitrary input sample size $n$, as well as continuous distributions, 
generalizing results in \citep{maron2020learning}.
Note that 
the two types of architectures radically differ (more in Section \ref{sec:expe}).
\end{rem}

\begin{rem}\label{rem:approx}(Approximation by an invariant NN)
After theorem \ref{thm:approx-main}, any invariant continuous function defined on distributions with compact support can be approximated with arbitrary precision by an invariant neural network (Appendix \ref{appendix:thm}). The proof involves mainly three steps: (i) an invariant layer $f_\phi$ can be approximated by an invariant network; (ii) the universal approximation theorem \citep{cybenko1989approximation,leshno1993multilayer}; (iii) uniform continuity is used to obtain uniform bounds.
\end{rem}

\begin{rem}\label{rem:spaces}(Extension to different spaces)
Theorem \ref{thm:approx-main} also extends to distributions supported on different spaces, via embedding them into a high-dimensional space. Therefore, any invariant function on distributions with compact support in $\RR^d$ with $d \le D$ can be uniformly approximated by an invariant network (Appendix \ref{appendix:thm}).
\end{rem}

\def\AppendRes{Appendix \ref{appendix:expe}}
\section{Experimental validation} \label{sec:expe}

\begin{figure}[ht!]
    \centering
    \includegraphics[width=1\textwidth]{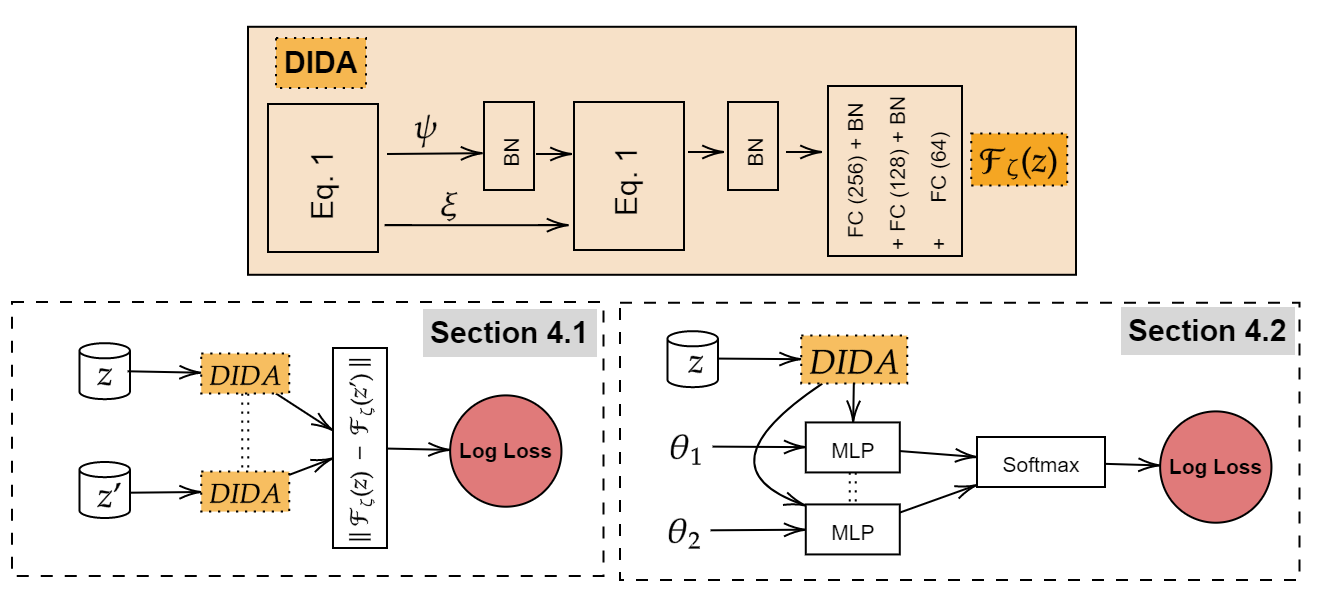}
    \caption{Learning meta-features with \XX. Top: the \XX{} architecture (BN stands for batch norm; FC for fully connected layer). Bottom left: Learning meta-features for patch identification using a Siamese architecture (section \ref{subsec:patchid}). Bottom right: learning meta-features for performance modelling, specifically to rank two hyper-parameter configurations $\theta_1$ and $\theta_2$ (section \ref{subsec:perflearning}).}
    \label{fig:expes_archi}
\end{figure}

The experimental validation presented in this section considers two goals of experiments: (i) assessing the ability of \XX{} to learn accurate meta-features; (ii)  assessing the merit of the \XX\ invariant layer design, building invariant $f_\phi$ on the top of an interactional function $\phi$ (Eq. \ref{def}). As said, this architecture is expected to grasp contrasts among samples, e.g. belonging to different classes; the proposed experimental setting aims to empirically investigate this conjecture.
These goals of experiments are tackled by comparing \XX\ to three baselines: DSS layers \citep{maron2020learning}; hand-crafted meta-features (HC) \citep{smith-mile-MLJ18} (Table \ref{tab:mf_handcrafted} in \AppendRes{}); \Baseline{} \citep{jomaa2019dataset2vec}. We implemented DSS (the code being not available) using linear and non-linear invariant layers.\footnote{The code source of \XX\ and (our implementation of) DSS is available in \AppendRes{}.}. All compared systems are allocated ca the same number of parameters. \\

%


\noindent{\bf Experimental setting.} Two tasks defined at the dataset level are considered: patch identification (section \ref{subsec:patchid}) and performance modelling (section \ref{subsec:perflearning}). On both tasks, the same \XX\ architecture is considered (Fig \ref{fig:expes_archi}), involving 2 invariant layers followed by 3 fully connected (FC) layers. Meta-features $\Ff_\zeta(\z)$ consist of the output of the third FC layer, with $\zeta$ denoting the trained \XX\ parameters. All experiments run on 1 NVIDIA-Tesla-V100-SXM2 GPU with 32GB memory, using Adam optimizer with base learning rate $10^{-3}$. 

\subsection{Task 1: Patch Identification} \label{subsec:patchid}
\def\u{\mbox{\bf u}}
The patch identification task consists of detecting whether two blocks of data are extracted from the same original dataset \citep{jomaa2019dataset2vec}. Letting $\u$ denote a $n$-sample, $d$-dimensional dataset, a patch $\z$ is constructed from $\u$ by retaining samples with index in $I \subset [n]$ and features with index in $J \subset [d]$. To each pair of patches $\z, \z'$ with same number of instances, is associated a binary meta-label $\ell(\z,\z')$ set to 1 iff \z\ and \z'\ are extracted from the same initial dataset $\u$.
\XX\ parameters $\zeta$ are trained to minimize the cross-entropy loss of model $\hat \ell_\zeta(\z,\z')=\exp\left( -\norm{\Ff_\zeta(\z)-\Ff_\zeta(\z')}_2 \right)$, with $\Ff_\zeta(\z)$ and $\Ff_\zeta(\z')$ the meta-features computed for $\z$ and $\z'$:
\begin{equation}\label{eq:binaryloss}
    \mbox{Minimize ~} {\cal L}(\zeta) = -
    \sum_{\z,\z'} 
    \ell(\z,\z') \log(\hat \ell_\zeta(\z,\z')) + (1-\ell(\z,\z')) \log(1-\hat \ell_\zeta(\z,\z'))
\end{equation}
The classification results on toy datasets and UCI datasets (Table \ref{tab:res_batch_ident}, detailed in \AppendRes{}) show the pertinence of the \XX{} meta-features, particularly on the UCI datasets where the number of features widely varies from one dataset to another.  The relevance of the interactional invariant layer design is established on this problem as \XX\ outperforms both \Baseline{} and \textsc{DSS}.   
%
%
%

\begin{table}[!ht]
   \centering
   \begin{tabular}{|c|c|c|c|c|}\toprule
     Method      & TOY                  & UCI\\ \midrule
     \Baseline{}$(^*)$    & 96.19 $\% \pm$ 0.28                           &  77.58 $\% \pm$ 3.13\\ \hline
     \textsc{DSS} layers (Linear aggregation)            & 89.32 $\% \pm$ 1.85                                   & 76.23 $\% \pm$ 1.84 \\
    \textsc{DSS} layers (Non-linear aggregation)            & 96.24 $\% \pm$ 2.04                                   & 83.97 $\% \pm$ 2.89\\
    \textsc{DSS} layers (Equivariant+invariant)            & 96.26 $\% \pm$ 1.40                                   & 82.94 $\% \pm$ 3.36 \\  
    \hline
    \XX{}          & \textbf{97.2} \% $\pm$ \textbf{0.1}           &  \textbf{89.70} \% $\pm$ \textbf{1.89}\\ 
    \bottomrule 
   \end{tabular}
   \caption{Patch identification: performance on 10 runs of \XX{}, \textsc{DSS} layers and  \Baseline{}. ($^*$): values reported from \citep{jomaa2019dataset2vec}.}
   \label{tab:res_batch_ident}
\end{table}

\subsection{Task 2: Performance model learning} \label{subsec:perflearning}
\def\MF{\mbox{$\Ff_\zeta(\z)$}}
The performance modelling task aims to assess {\em a priori} the accuracy of the classifier learned from a given machine learning algorithm with a given configuration $\theta$ (vector of hyper-parameters ranging in a hyper-parameter space $\Theta$, \AppendRes{}), on a dataset \z\ (for brevity, the performance of $\theta$ on $\z$) \citep{Rice76}. 
For each ML algorithm, ranging in Logistic regression (LR), SVM, k-Nearest Neighbours (k-NN), linear classifier learned with stochastic gradient descent (SGD), a set of meta-features is learned to predict whether some configuration $\theta_1$ outperforms some configuration $\theta_2$ on dataset $\z$: to each triplet $(\z,\theta_1,\theta_2)$ is associated a binary value $\ell(\z,\theta_1,\theta_2)$, set to 1 iff $\theta_2$ yields better performance than $\theta_1$ on $\z$.
\XX{} parameters $\zeta$ are trained to build model $\hat \ell_\zeta$, minimizing the (weighted version of) cross-entropy loss (\ref{eq:binaryloss}), where $\hat \ell_\zeta(\z,\theta_1,\theta_2)$ is a 2-layer FC network with input vector $(\Ff_\zeta(\z);\theta_1;\theta_2)$, depending on the considered ML algorithm and its configuration space.
%
%

In each epoch, a batch made of triplets $(\z,\theta_1,\theta_2)$ is built, with $\theta_1, \theta_2$ uniformly drawn in the algorithm configuration space (Table \ref{tab:hp_space}) and $\z$ a $n$-sample $d$-dimensional patch of a  dataset in the OpenML CC-2018 \citep{bischl2019openml} with $n$ uniformly drawn in $[700; 900]$ and $d$ in $[3; 10]$.

\begin{table}[t]
   \centering
    \resizebox{\columnwidth}{!}{%
   \begin{tabular}{|c|c|c|c|c|}\toprule
     Method      & SGD  &  SVM  &   LR  & k-NN \\ \midrule
     Hand-crafted & 71.18 $\pm$ 0.41 & 75.39 $\pm$ 0.29 & 86.41 $\pm$ 0.419 & 65.44 $\pm$ 0.73 \\ \midrule
     
     \textsc{DSS} (Linear aggregation) & 73.46 $\pm$ 1.44 & 82.91 $\pm$ 0.22 & 87.93 $\pm$ 0.58 & 70.07 $\pm$ 2.82 \\
     \textsc{DSS} (Equivariant+Invariant) & 73.54 $\pm$ 0.26 & 81.29 $\pm$ 1.65 & 87.65 $\pm$ 0.03 & 68.55 $\pm$ 2.84 \\
     \textsc{DSS} (Non-linear aggregation) & 74.13 $\pm$ 1.01 & 83.38 $\pm$ 0.37 & 87.92 $\pm$ 0.27 & 73.07 $\pm$ 0.77 \\ \midrule
     
     \textsc{DIDA} & \textbf{78.41} $\pm$ \textbf{0.41} & \textbf{84.14} $\pm$ \textbf{0.02} & \textbf{89.77} $\pm$ \textbf{0.50} & \textbf{81.82} $\pm$ \textbf{0.91} \\

    \bottomrule 
   \end{tabular}
   }
   \caption{Pairwise ranking of configurations, for ML algorithms SGD, SVM, LR and k-NN: performance on test set of \XX{}, hand-crafted and DSS (average and std deviation on 3 runs).}
   \label{tab:mis_ranking}
\end{table}

The quality of the \XX\ meta-features is assessed from the ranking accuracy (Table \ref{tab:mis_ranking}), showing their relevance. The performance gap compared to the baselines is higher for the k-NN modelling task; this is explained as the sought performance model only depends on the local geometry of the examples. Still, good performances are observed over all considered algorithms.
%
A regression setting, where a real-valued $\ell(\Ff_\zeta(\z),\theta)$ learns the predicted performance, can be successfully considered on top of the learned meta-features $\Ff_\zeta(\z)$ (illustrated on the k-NN algorithm on Figure \ref{fig:scatter_1}; other results are presented in \AppendRes{}). 
%
%
%

\begin{figure*}
\centering
\includegraphics[width=0.80\linewidth]{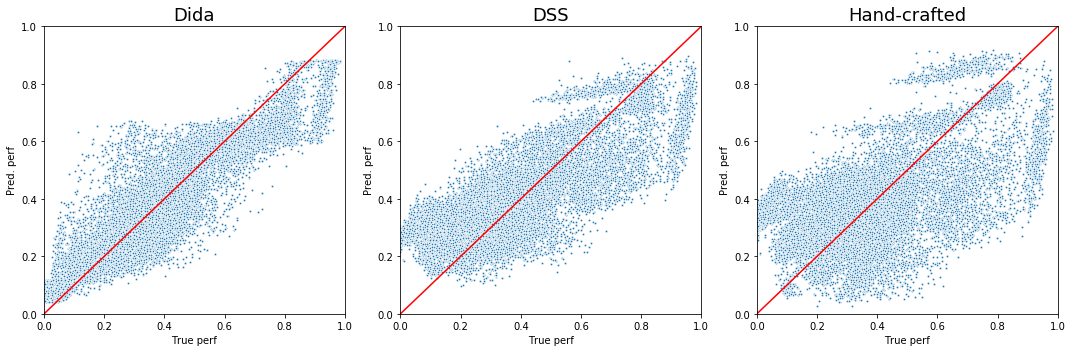}

\caption{k-NN: True performance vs performance predicted by regression on top of the meta-features (i) learned by \XX{}, (ii) DSS or (iii) Hand-crafted statistics.
\label{fig:scatter_1}}
\end{figure*}

\section{Conclusion}\label{sec:conclusion}

The theoretical contribution of the paper is the \XX\ architecture, able to learn from discrete and continuous distributions on $\RR^d$, invariant w.r.t. feature ordering, agnostic w.r.t. the size and dimension $d$ of the considered distribution sample (with $d$ less than some upper bound $D$). This architecture enjoys universal approximation and robustness properties, generalizing former results obtained for point clouds \citep{maron2020learning}.
The merits of \XX\ are demonstrated on two tasks defined at the dataset level: patch identification and performance model learning, comparatively to the state of the art  \citep{maron2020learning,jomaa2019dataset2vec,munoz2018instance}.
%
The ability to accurately describe a dataset in the landscape defined by ML algorithms opens new perspectives to compare datasets and algorithms, e.g. for domain adaptation \citep{BenDavid07, BenDavid10} and meta-learning \citep{finn2018,yoon2018}.


\section*{Acknowledgements}

The work of G. De Bie is supported by the R\'egion Ile-de-France.
H. Rakotoarison acknowledges funding from the ADEME \#1782C0034 project NEXT.
The work of G. Peyr\'e was supported by the European Research Council (ERC project NORIA) and by the French government under management of Agence Nationale de la Recherche as part of the ``Investissements d’avenir'' program, reference ANR19-P3IA-0001 (PRAIRIE 3IA Institute).

\newpage
\bibliographystyle{acm}
\bibliography{main}

\newpage
\appendix

\section{Extension to arbitrary distributions}\label{appendix:notations}

\def\MMd{\mbox{$\mathcal{P}(\RR^d)$}}
\def\MMr{\mbox{$\mathcal{P}(\RR^r)$}}
\def\MMO{\mbox{$\mathcal{P}(\Omega)$}}
\def\MMOq{\mbox{$\mathcal{P}(\Omega)_{/\sim}$}}
\paragraph{Overall notations.}
Let $X \in \Rr(\RR^d)$ denote a random vector on $\RR^d$ with  $\al_X \in \MMd$ its law (a positive Radon measure with unit mass). By definition,  its expectation denoted $\EE(X)$ reads $\EE(X) = \int_{\RR^d} x \mathrm{d}\al_X(x) \in \RR^d$, and for any continuous function $f:\RR^d\to\RR^r$, $\EE(f(X)) = \int_{\RR^d} f(x) \mathrm{d}\al_X(x)$. 
In the following, two random vectors $X$ and $X'$ with same law $\al_X$ are considered indistinguishable, noted $X{'}\sim X$.
Letting $f : \RR^d \mapsto \RR^r$ denote a function on $\RR^d$, the push-forward operator by $f$, noted $f_\sharp: \MMd \mapsto \MMr$ is defined as follows, for any $g$ continuous function from $\RR^d$ to $\RR^r$ ($g$ in $\Cc(\RR^d;\RR^r)$): 
    $$\forall g \in \Cc(\RR^d;\RR^r) \quad 
	\int_{\RR^r} g \mathrm{d}(f_\sharp \al) \eqdef \int_{\RR^d} g(f(x)) \mathrm{d}\al(x)$$	
Letting $\{x_i\}$ be a set of points in $\RR^d$ with $w_i \geq 0$ such that $\sum_i w_i=1$, the discrete measure $\al_X=\sum_i w_i \delta_{x_i}$ is the sum of the Dirac measures $\delta_{x_i}$ weighted by $w_i$.

\paragraph{Invariances.}
In this paper, we consider functions on probability measures that are \emph{invariant with respect to permutations of coordinates}. Therefore, denoting $S_d$ the $d$-sized permutation group, we consider measures over a symmetrized compact $\Omega \subset \RR^d$ equipped with the following equivalence relation: for $\alpha$, $\beta \in \MMO, \alpha \sim \beta \iff \exists \sigma \in S_d, \beta=\sigma_\sharp \alpha$, such that a measure and its permuted counterpart are indistinguishable in the corresponding quotient space, denoted alternatively \MMOq\ or $\Rr(\Omega)_{/\sim}$. A function $\phi: \Omega^n \to \RR$ is said to be invariant (by permutations of coordinates) iff $\forall \sigma \in S_d, \phi(x_1,\hdots,x_n)=\phi(\sigma(x_1),\hdots,\sigma(x_n))$ (Definition \ref{def}). 
\paragraph{Tensorization.} Letting $X$ and $Y$ respectively denote two random vectors on $\Rr(\RR^d)$ and $\Rr(\RR^p)$, the tensor product vector $X\otimes Y$ is defined as: $X\otimes Y\eqdef (X^{'},Y^{'}) \in \Rr(\RR^d\times\RR^p)$, where $X^{'}$ and $Y^{'}$ are independent and have the same law as $X$ and $Y$, i.e. $\d(\al_{X\otimes Y})(x,y)=\d\al_X(x)\d\al_Y(y)$. In the finite case, for  $\al_X=\frac{1}{n}\sum_i \delta_{x_i}$ and $\al_Y=\frac{1}{m}\sum_j \delta_{y_j}$, then $\al_{X\otimes Y}=\frac{1}{nm}\sum_{i,j} \delta_{x_i,y_j}$, weighted sum of Dirac measures on all pairs  $(x_i,y_j)$. The $k-$fold tensorization of a random vector $X\sim\al_X$, with law $\al_X^{\otimes k}$, generalizes the above construction to the case of $k$ independent random variables with law $\al_X$. Tensorization will be used to define the law of datasets, and  design universal architectures (Appendix \ref{appendix:thm}).

\paragraph{Invariant layers.} In the general case, a $G$-invariant layer $f_\phi$ with invariant map $\phi:\RR^d\times\RR^d \to \RR^r$ such that $\phi$ satisfies
$$ \forall (x_1,x_2)\in(\RR^d)^2, \forall\sigma \in G, \phi(\sigma(x_1),\sigma(x_2))=\phi(x_1,x_2)
$$ is defined as 
$$ f_\phi:X \in \Rr(\RR^d)_{/\sim}\mapsto \EE_{X'\sim X}[\phi(X,X')] \in \Rr(\RR^r)_{/\sim}
$$
where the expectation is taken over $X' \sim X$. Note that considering the couple $(X,X')$ of independent random vectors $X'\sim X$ amounts to consider the tensorized law $\al_X\otimes\al_X$.

\begin{rem}
Taking as input a discrete distribution $\al_X = \sum_{i=1}^n w_i \delta_{x_i}$, the invariant layer outputs another discrete distribution $\al_Y=\sum_{i=1}^n w_i \delta_{y_i}$ with $y_i= \sum_{j=1}^n w_j \phi(x_i,x_j)$; each input point $x_i$ is mapped onto $y_i$ summarizing the pairwise interactions with $x_i$ after $\phi$. 
\end{rem}

\begin{rem}\label{rem:group}(Generalization to arbitrary invariance groups) The definition of invariant  $\phi$ can be generalized to arbitrary invariance groups operating on $\RR^d$, in particular sub-groups of the permutation group $S_d$. 
After \citep{maron2020learning} (Thm 5), a simple and only way to design an invariant linear function is to consider $\phi(z,z')=\psi(z+z')$ with $\psi$ being $G$-invariant. 
How to design invariant functions in the general non-linear case is left for further work.
\end{rem}

\begin{rem}
Invariant layers can also be generalized to handle higher order interactions functionals, namely $f_{\phi}(X)\eqdef \EE_{X_2,\hdots,X_N\sim X}[\phi(X,X_2,\hdots,X_N)]$, which amounts to consider, in the discrete case, $N$-uple of inputs points $(x_{j_1},\hdots,x_{j_N}).$
\end{rem}

\section{Proofs on Regularity}\label{appendix:reg}
\def\MMd{\mbox{$\mathcal{P}(\RR^d)$}}

\paragraph{Wasserstein distance.} The regularity of the involved functionals is measured w.r.t. the $1$-Wasserstein distance between two probability distributions $(\al,\be) \in \MMd$
\begin{equation*}
	\Wass_1(\al,\be) \eqdef \umin{ \pi_1=\al,\pi_2=\be } \int_{\RR^d \times \RR^d} \norm{x-y} \d\pi(x,y) \eqdef \umin{ X\sim\al,Y\sim\be } \EE(\norm{X-Y})
\end{equation*}
where the minimum is taken over measures on $\RR^d \times \RR^d$ with marginals $\al,\be \in \MMd$. $\Wass_{1}$ is known to be a norm \citep{santambrogio2015optimal}, that can be conveniently computed using
\eq{
	\Wass_1(\al,\be) = \Wass_1(\al-\be) = \umax{\Lip(g) \leq 1} \int_{\RR^d} g \d(\al-\be),
}
where $\Lip(g)$ is the Lipschitz constant of $g : \RR^d \rightarrow \RR$ with respect to the Euclidean norm (unless otherwise stated). For simplicity and by abuse of notations, $\Wass_1(X,Y)$ is used instead of $\Wass_1(\al,\be)$ when $X\sim\al$ and $Y\sim\be$. 
The convergence in law denoted $\rightharpoonup$ is equivalent to the convergence in Wasserstein distance in the sense that $X_k \rightharpoonup X$ is equivalent to $\Wass_1(X_k,X) \rightarrow 0$.

\paragraph{Permutation-invariant Wasserstein distance.} The Wasserstein distance is quotiented according to the permutation-invariance equivalence classes: for $\al,\be \in \MMd$ 
$$\overline{\Wass}_1(\al,\be) \eqdef \min_{\sigma \in S_d} \Wass_1(\sigma_\sharp \al,\be)=\min_{\sigma \in S_d} \umax{\Lip(g) \leq 1} \int_{\RR^d} g \circ \sigma \d\al-\int_{\RR^d} g \d\be
$$
such that $\overline{\Wass}_1(\al,\be)=0 \iff \al\sim\be$. $\overline{\Wass}_1$ defines a norm on $\MMd_{/\sim}$.

\paragraph{Lipschitz property.} A map $f : \Rr(\RR^d)
\rightarrow \Rr(\RR^r)
$ is continuous for the convergence in law (aka the weak$^*$ of measures) if for any sequence $X_k \rightharpoonup X$, then $f(X_k) \rightharpoonup f(X)$. 
Such a map is furthermore said to be $C$-Lipschitz for the permutation invariant 1-Wasserstein distance if 
\begin{align}
	\foralls (X,Y) \in (\Rr(\RR^d)_{/\sim})^2, \, 
	\overline{\Wass}_1( f(X),f(Y) )
	\leq 
	C \overline{\Wass}_1(X,Y).
\end{align}
Lipschitz properties enable us to analyze robustness to input perturbations, since it ensures that if the input distributions of random vectors are close in the permutation invariant Wasserstein sense, the corresponding output laws are close, too. 


\paragraph{Proofs of section \ref{subsec:reg}.}
\begin{proof}(Proposition \ref{prop:lip}). For $\al, \be \in \MMd{}$, Proposition 1 from \citep{de2019stochastic} yields $\Wass_1(f_\phi(\al),f_\phi(\be)) \leq 2r\Lip(\phi)\Wass_1(\al,\be)$, hence, for $\sigma \in \G,$
\begin{align*}
    \Wass_1(\sigma_\sharp f_\phi(\al),f_\phi(\be)) & \leq \Wass_1(\sigma_\sharp f_\phi(\al),f_\phi(\al)) + \Wass_1(f_\phi(\al),f_\phi(\be))  \\
    & \leq \Wass_1(\sigma_\sharp f_\phi(\al),f_\phi(\al)) + 2r\Lip(\phi)\Wass_1(\al,\be)
\end{align*}
hence, taking the infimum over $\sigma$ yields
\begin{align*}
    \overline{\Wass}_1(f_\phi(\al),f_\phi(\be)) & \leq \overline{\Wass}_1(f_\phi(\al),f_\phi(\al)) + 2r\Lip(\phi)\Wass_1(\al,\be) \\
    & \leq 2r\Lip(\phi)\Wass_1(\al,\be)
\end{align*}
Since $f_\phi$ is invariant, for $\sigma \in \G$, $f_\phi(\z)=f_\phi(\sigma_\sharp \z)$,
\begin{align*}
    \overline{\Wass}_1(f_\phi(\al),f_\phi(\be)) & \leq 2r\Lip(\phi)\Wass_1(\sigma_\sharp\al,\be)
\end{align*}
Taking the infimum over $\sigma$ yields the result.
\end{proof}

\begin{proof}(Proposition \ref{prop:lipAE}). 
To upper bound
$\overline{\Wass}_1(\xi_\sharp f_\phi(\tau_\sharp\al),f_\phi(\al))$ for $\al\in\MMd{}$, we proceed as follows, using proposition 3 from \citep{de2019stochastic} and proposition \ref{prop:lip}:
\begin{align*}
    \Wass_1(\xi_\sharp f_\phi(\tau_\sharp\al_\phi(\al)),f_\phi(\al)) &\leq \Wass_1(\xi_\sharp f_\phi(\tau_\sharp\al),f_\phi(\tau_\sharp \al)) + \Wass_1(f_\phi(\tau_\sharp \al),f_\phi(\al)) \\
    & \leq \norm{\xi-id}_{L^1(f_\phi(\tau_\sharp \al))} + \Lip(f_\phi) \Wass_1(\tau_\sharp \al, \al) \\
    & \leq \sup_{y \in f_\phi(\tau(\Omega))} \norm{\xi(y)-y}_2 + 2r \Lip(\phi)\sup_{x\in\Omega} \norm{\tau(x)-x}_2
\end{align*}
For $\sigma \in \G$, we get
\begin{align*}
    \Wass_1(\sigma_\sharp \xi_\sharp f_\phi(\tau_\sharp\al),f_\phi(\al)) & \leq \Wass_1(\sigma_\sharp \xi_\sharp f_\phi(\tau_\sharp\al),\xi_\sharp f_\phi(\tau_\sharp\al)) + \Wass_1(\xi_\sharp f_\phi(\tau_\sharp\al),f_\phi(\al))
\end{align*}
Taking the infimum over $\sigma$ yields
\begin{align*}
    \overline{\Wass}_1(\xi_\sharp f_\phi(\tau_\sharp\al),f_\phi(\al)) & \leq \Wass_1(\xi_\sharp f_\phi(\tau_\sharp\al),f_\phi(\al)) \\
    &\leq \sup_{y \in f_\phi(\tau(\Omega))} \norm{\xi(y)-y}_2 + 2rC(\phi)\sup_{x\in\Omega} \norm{\tau(x)-x}_2
\end{align*}

Similarly, for $\al,\be\in(\MMd{})^2,$
\begin{align*}
    \Wass_1(\xi_\sharp f_\phi(\tau_\sharp\al),\xi_\sharp f_\phi(\tau_\sharp\be)) 
    & \leq \Lip(\xi) \Wass_1(f_\phi(\tau_\sharp\al),f_\phi(\tau_\sharp\be)) \\
    & \leq \Lip(\xi) \Lip(f_\phi) \Wass_1(\tau_\sharp\al,\tau_\sharp\be) \\
    & \leq 2r \Lip(\phi) \Lip(\xi) \Lip(\tau) \Wass_1(\al,\be)\\
\end{align*}
hence, for $\sigma \in \G$,
\begin{align*}
    \Wass_1(\sigma_\sharp \xi_\sharp f_\phi(\tau_\sharp\al), \xi_\sharp f_\phi(\tau_\sharp\be)) & \leq \Wass_1(\sigma_\sharp \xi_\sharp f_\phi(\tau_\sharp\al), \xi_\sharp f_\phi(\tau_\sharp\al)) \\
    & \qquad + \Wass_1(\xi_\sharp f_\phi(\tau_\sharp\al), \xi_\sharp f_\phi(\tau_\sharp\be))
\end{align*}
and taking the infimum over $\sigma$ yields
\begin{align*}
    \overline{\Wass}_1(\xi_\sharp f_\phi(\tau_\sharp\al), \xi_\sharp f_\phi(\tau_\sharp\be)) & \leq \Wass_1(\xi_\sharp f_\phi(\tau_\sharp\al), \xi_\sharp f_\phi(\tau_\sharp\be)) \\
    &\leq 2r \Lip(\phi) \Lip(\xi) \Lip(\tau) \Wass_1(\al,\be)
\end{align*}
Since $\tau$ is equivariant: namely, for $\al \in \MMd{},$ $\sigma \in \G, \tau_\sharp (\sigma_\sharp \al)=\sigma_\sharp (\tau_\sharp \al),$ hence, since $f_\phi$ is invariant, $f_\phi(\tau_\sharp (\sigma_\sharp \al)) = f_\phi(\sigma_\sharp (\tau_\sharp \al)) = f_\phi(\tau_\sharp \al),$ hence for $\sigma \in \G,$
\begin{align*}
    \overline{\Wass}_1(\xi_\sharp f_\phi(\tau_\sharp\al), \xi_\sharp f_\phi(\tau_\sharp\be)) &\leq 2r \Lip(\phi) \Lip(\xi) \Lip(\tau) \Wass_1(\sigma_\sharp \al,\be)
\end{align*}
Taking the infimum over $\sigma$ yields the result.
\end{proof}

\section{Proofs on Universality}\label{appendix:thm}
\paragraph{Detailed proof of Theorem \ref{thm:approx-main}.}
This paragraph details the result in the case of $S_d-$invariance, while the next one focuses on invariances w.r.t. products of permutations. Before providing a proof of Theorem \ref{thm:approx-main} we first state two useful lemmas. Lemma \ref{lem-old} is mentioned for completeness, referring the reader to \cite{de2019stochastic}, Lemma 1 for a proof.

\begin{lem}\label{lem-old}
	Let $\left( S_j \right)_{j=1}^N$ be a partition of a domain including $\Omega$ ($S_j \subset \RR^d$) and let $x_j \in S_j$.
	Let $( \phi_j )_{j=1}^N$ a set of bounded functions $\phi_j : \Om \rightarrow \RR$ supported on $S_j$, such that $\sum_j \phi_j=1$ on $\Om$. 
	For $\al \in \MMO{}$, we denote $\hat{\al}_N \eqdef \sum_{j=1}^N \al_j \delta_{x_j}$ with $\al_j \eqdef \int_{S_j} \phi_j\d\al$. One has, denoting $\De_j \eqdef \max_{x \in S_j} \norm{x_j-x}$, 
	\eq{
		\Wass_1( \hat{\al}_N, \al) \leq \max_{1 \leq j \leq N} \De_j.
	}
\end{lem}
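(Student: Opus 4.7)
The plan is to prove the bound via the primal formulation of $\Wass_1$ as the infimum of transport costs over couplings, rather than through Kantorovich--Rubinstein duality: the partition of unity $(\phi_j)_j$ essentially prescribes how the mass of $\al$ at a point $x$ should be split across the centroids $x_j$, so an admissible transport plan is handed to us for free. I would proceed in three steps: construct the coupling, verify its marginals, then bound its cost.

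First I would define the candidate coupling $\pi$ on $\Omega\times\Omega$ by
\[
\int g(x,y)\,\d\pi(x,y) = \sum_{j=1}^N \int_\Omega g(x,x_j)\, \phi_j(x)\, \d\al(x)
\]
for every continuous test function $g$. The first marginal is obtained by taking $g$ depending only on $x$: it integrates against $\sum_j \phi_j(x)\, \d\al(x) = \d\al(x)$ thanks to the partition-of-unity identity $\sum_j \phi_j = 1$ on $\Omega$. The second marginal, obtained by taking $g$ depending only on $y$, equals $\sum_j \bigl(\int_\Omega \phi_j\,\d\al\bigr)\,\delta_{x_j} = \sum_j \al_j\, \delta_{x_j} = \hat\al_N$. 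Hence $\pi$ is a valid coupling of $\al$ and $\hat\al_N$.

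Second, I would bound the transport cost. Because $\phi_j$ is supported on $S_j$, each summand collapses to an integral over $S_j$, and on $S_j$ one has $\norm{x-x_j}\le \De_j$ by definition. Using $\sum_j \al_j = \int_\Omega \sum_j \phi_j\,\d\al = 1$ to normalize, this yields
\[
\int \norm{x-y}\,\d\pi(x,y) \;=\; \sum_{j=1}^N \int_{S_j} \norm{x-x_j}\,\phi_j(x)\,\d\al(x) \;\le\; \sum_j \De_j\, \al_j \;\le\; \max_j \De_j,
\]
and taking the infimum over couplings in the primal definition of $\Wass_1$ delivers $\Wass_1(\hat\al_N,\al)\le \max_j \De_j$.

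I do not foresee a serious obstacle: the only mild subtlety is that the $\phi_j$ are assumed merely bounded rather than indicator functions of disjoint cells, but the combination of $\phi_j$ vanishing outside $S_j$ and $\sum_j \phi_j = 1$ on $\Omega$ handles this cleanly both in the marginal check and in the cost bound. A dual approach via Kantorovich--Rubinstein would also work---writing $\int f\,\d(\hat\al_N - \al) = \sum_j \int (f(x_j)-f(x))\,\phi_j(x)\,\d\al(x)$ and invoking $\lvert f(x_j)-f(x)\rvert\le\norm{x-x_j}\le\De_j$ for any $1$-Lipschitz $f$---and reaches the same estimate, but the explicit coupling above feels more transparent and reusable.
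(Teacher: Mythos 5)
Your proof is correct. Note that the paper itself does not supply an argument for this lemma: it explicitly defers to Lemma~1 of \cite{de2019stochastic}. Your primal construction --- defining the coupling $\pi$ by $\int g\,\d\pi = \sum_j \int g(x,x_j)\,\phi_j(x)\,\d\al(x)$, checking the two marginals via $\sum_j\phi_j=1$ and $\al_j=\int_{S_j}\phi_j\,\d\al$, and bounding the cost cellwise by $\De_j$ --- is the natural transport-plan proof and matches the approach in that reference; the dual Kantorovich--Rubinstein argument you sketch at the end is an equally valid alternative. One small point worth making explicit: for $\pi$ to be an admissible (nonnegative) coupling and for $\hat\al_N$ to be a probability measure, the $\phi_j$ must be nonnegative, which the lemma statement (``bounded functions'') leaves implicit but which holds in the paper's intended use (P1 hat functions forming a partition of unity). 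With that hypothesis made explicit, the argument is complete.
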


\begin{lem}\label{lem-holder}
	Let $f:\mathbb{R}^d\to\mathbb{R}^q$ a $1/p$-Hölder continuous function $(p\geq 1)$, then there exists a constant $C>0$ such that for all $\alpha, \beta \in \MMd{}$, $\Wass_1(f_\sharp\alpha,f_\sharp\beta) \leq C \Wass_1(\alpha,\beta)^{1/p}$.
\end{lem}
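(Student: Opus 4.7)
The plan is to work directly from the Kantorovich (coupling) formulation of $\Wass_1$ recalled at the start of Appendix \ref{appendix:reg}, namely $\Wass_1(\al,\be) = \min_{X\sim\al, Y\sim\be} \EE(\norm{X-Y})$, and transfer the Hölder bound on $f$ through an optimal coupling. Let $C_f$ denote the $1/p$-Hölder constant of $f$, so that $\norm{f(x)-f(y)} \leq C_f \norm{x-y}^{1/p}$ for all $x,y \in \RR^d$.

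First I would fix $\al,\be \in \MMd{}$ and pick an optimal coupling $\pi \in \Pp(\RR^d \times \RR^d)$ with marginals $\al$ and $\be$ achieving $\Wass_1(\al,\be) = \int \norm{x-y} \d\pi(x,y)$ (existence follows from the standard tightness / lower semicontinuity argument in $\Wass_1$ theory, as recalled in \citep{santambrogio2015optimal}). Then the pushforward $(f \otimes f)_\sharp \pi$ is a coupling of $f_\sharp \al$ and $f_\sharp \be$, so by the minimality definition of $\Wass_1$ applied to the pushforwards,
\begin{equation*}
    \Wass_1(f_\sharp \al, f_\sharp \be) \;\leq\; \int \norm{f(x)-f(y)} \d\pi(x,y) \;\leq\; C_f \int \norm{x-y}^{1/p} \d\pi(x,y).
\end{equation*}

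The key analytic step is then Jensen's inequality. Since $p \geq 1$, the map $t \mapsto t^{1/p}$ is concave on $[0,\infty)$, and $\pi$ is a probability measure on $\RR^d \times \RR^d$, so
\begin{equation*}
    \int \norm{x-y}^{1/p} \d\pi(x,y) \;\leq\; \left( \int \norm{x-y} \d\pi(x,y) \right)^{1/p} = \Wass_1(\al,\be)^{1/p}.
\end{equation*}
Combining the two displays yields $\Wass_1(f_\sharp \al, f_\sharp \be) \leq C_f \,\Wass_1(\al,\be)^{1/p}$, which is the desired estimate with $C = C_f$.

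There is no real obstacle: the only subtleties are (i) ensuring that an optimal coupling exists (standard in the $\Wass_1$ theory on $\RR^d$ once we restrict, as is done elsewhere in the paper, to measures with finite first moment or compact support, so the integrals are finite), and (ii) checking the direction of Jensen's inequality, for which the concavity of $t^{1/p}$ when $p \geq 1$ is precisely what is needed. Note that the argument never uses anything about $f$ beyond its Hölder estimate, so the same proof covers the case $p=1$ (recovering the classical Lipschitz bound with constant $C_f$) as a sanity check.
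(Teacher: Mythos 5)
Your proof is correct and follows essentially the same route as the paper's: bound the transport cost of the pushforward coupling using the Hölder estimate, then apply Jensen's inequality (concavity of $t\mapsto t^{1/p}$ for $p\geq 1$) to the coupling integral. The only cosmetic difference is that you fix an optimal coupling at the outset while the paper bounds an arbitrary coupling and then takes the infimum, which is immaterial; incidentally, your explicit remark that concavity requires $p\geq 1$ is the right condition (the paper's parenthetical ``$p\leq 1$'' is a typo).
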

\begin{proof}
For any transport map $\pi$ with marginals $\al$ and $\beta$, $1/p$-Hölderness of $f$ with constant $C$ yields $\int \lvert\lvert f(x)-f(y) \rvert\rvert_2 \mathrm{d}\pi(x,y) \leq C \int \lvert\lvert x-y \rvert\rvert_2^{1/p} \mathrm{d}\pi(x,y) \leq C \left( \int \lvert\lvert x-y \rvert\rvert_2 \mathrm{d}\pi(x,y) \right)^{1/p}$ using Jensen's inequality ($p\leq 1$). Taking the infimum over $\pi$ yields $\Wass_1(f_\sharp\alpha,f_\sharp\beta) \leq C \Wass_1(\alpha,\beta)^{1/p}$.
\end{proof}

Now we are ready to dive into the proof. Let $\al \in \MMd{}.$ We consider:
\begin{itemize}
    \item[$\bullet$] $h: x=(x_1,\hdots,x_d) \in \mathbb{R}^{d}\mapsto \left( \sum_{1 \leq j_1 < \hdots < j_i \leq d} x_{j_1} \cdot \hdots \cdot x_{j_i}  \right)_{i=1\hdots d} \in \mathbb{R}^{d}$ the collection of $d$ elementary symmetric polynomials; $h$ does not lead to a loss in information, in the sense that it generates the ring of $S_d$-invariant polynomials (see for instance \cite{cox2007ideals}, chapter 7, theorem 3) while preserving the classes (see the proof of Lemma 2, appendix D from \cite{maron2020learning});
    
    \item[$\bullet$] $h$ is obviously not injective, so we consider $\pi:\mathbb{R}^d \to \mathbb{R}^d/S_d$ the projection onto $\mathbb{R}^d/S_d$: $h=\Tilde{h}\circ \pi$ such that $\Tilde{h}$ is bijective from $\pi(\Omega)$ to its image $\Omega^{'}$, compact of $\mathbb{R}^d$; $\Tilde{h}$ and ${\Tilde{h}^{-1}}$ are continuous;
    
    \item[$\bullet$] Let $(\phi_i)_{i=1\hdots N}$ the piecewise affine P1 finite element basis, which are hat functions on a discretization $(S_i)_{i=1\hdots N}$ of $\Omega^{'} \subset \mathbb{R}^{d}$, with centers of cells $(y_i)_{i=1\hdots N}$. We then define
    $g: x \in \mathbb{R}^{d} \mapsto (\phi_1(x),\hdots,\phi_N(x)) \in \mathbb{R}^{N}$;
    
    \item[$\bullet$] $f:(\alpha_1,\hdots,\alpha_N) \in \mathbb{R}^{N} \mapsto \Ff\left(\sum_{i=1}^N\alpha_i \delta_{\Tilde{h}^{-1}(y_i)}\right) \in \mathbb{R}$.
\end{itemize}

We approximate $\Ff$ using the following steps: 
\begin{itemize}
    \item[$\bullet$] Lemma \ref{lem-old} (see Lemma 1 from \cite{de2019stochastic}) yields that ${h}_\sharp\al$ and $\widehat{h_\sharp\al}=\sum_{i=1}^N \alpha_i\delta_{y_i}$ are close: $\Wass_1({h}_\sharp\al,\widehat{h_\sharp\al}) \leq \sqrt{d}/N^{1/d}$;
    
    \item[$\bullet$] The map ${\Tilde{h}^{-1}}$ is regular enough ($1/d$-Hölder) such that according to Lemma \ref{lem-holder}, there exists a constant $C > 0$ such that
    $$\Wass_1({\Tilde{h}^{-1}}_\sharp({h}_\sharp\al),{\Tilde{h}^{-1}}_\sharp\widehat{h_\sharp\al}) \leq C \Wass_1({h}_\sharp\al,\widehat{h_\sharp\al})^{1/d} \leq C d^{1/2d}/N^{1/d^2}$$ 
    Hence $\overline{\Wass}_1(\al,{\Tilde{h}^{-1}}_\sharp\widehat{h_\sharp\al}) := \inf_{\sigma \in S_d} \Wass_1(\sigma_\sharp \al,{\Tilde{h}^{-1}}_\sharp\widehat{h_\sharp\al}) \leq C d^{1/2d}/N^{1/d^2}.$
    
    Note that $h$ maps the roots of polynomial $\prod_{i=1}^d (X-x^{(i)})$ to its coefficients (up to signs). Theorem 1.3.1 from \cite{Rahman2002AnalyticTO} yields continuity and $1/d$-Hölderness of the reverse map. Hence ${\Tilde{h}^{-1}}$ is $1/d$-Hölder.
    
    \item[$\bullet$] Since $\Omega$ is compact, by Banach-Alaoglu theorem, we obtain that $\MMO{}$ is weakly-* compact, hence  $\MMOq{}$ also is. Since $\Ff$ is continuous, it is thus uniformly weak-* continuous: for any $\epsilon >0$, there exists $\delta > 0$ such that $\overline{\Wass}_1(\al,{\Tilde{h}^{-1}}_\sharp\widehat{h_\sharp\al}) \leq \delta$ implies $\lvert \Ff(\alpha)-\Ff({\Tilde{h}^{-1}}_\sharp\widehat{h_\sharp\al}) \rvert < \epsilon$. Choosing $N$ large enough such that $C d^{1/2d}/N^{1/d^2} \leq \delta$ therefore ensures that $\lvert \Ff(\alpha)-\Ff({\Tilde{h}^{-1}}_\sharp\widehat{h_\sharp\al}) \rvert < \epsilon$.
\end{itemize}

\paragraph{Extension of Theorem \ref{thm:approx-main} to products of permutation groups.}

\begin{cor} \label{cor:product}
    Let $\Ff:\MMOq{} \to \RR$ a continuous $S_{d_1}\times\hdots\times S_{d_n}$-invariant map ($\sum_i d_i=d$), where $\Omega$ is a symmetrized compact over $\RR^d$. Then $\forall \epsilon > 0$, there exists three continuous maps $f, g, h$ such that
    $$ \forall \al \in \Mm_+^1(\Omega)_{/ \sim}, \lvert \Ff(\al)-f \circ \EE \circ g (h_\sharp\al)  \rvert < \epsilon 
    $$ where $h$ is invariant; $g, h$ are independent of $\Ff$.
\end{cor}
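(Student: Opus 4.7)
The plan is to mirror the proof of Theorem \ref{thm:approx-main}, replacing the single block of $d$ elementary symmetric polynomials by a concatenation of blockwise elementary symmetric polynomials adapted to the product structure of the group. Write $x \in \RR^d$ as $x = (x^{(1)}, \ldots, x^{(n)})$ with $x^{(i)} \in \RR^{d_i}$, and define
\begin{equation*}
h(x) \eqdef \bigl( h^{(1)}(x^{(1)}), \ldots, h^{(n)}(x^{(n)}) \bigr) \in \RR^d,
\end{equation*}
where $h^{(i)}$ collects the $d_i$ elementary symmetric polynomials in the $d_i$ coordinates of $x^{(i)}$. By the standard characterization of invariants of $S_{d_i}$, the map $h$ generates the ring of $S_{d_1} \times \cdots \times S_{d_n}$-invariant polynomials and, after quotienting by the product group via the projection $\pi : \RR^d \to \RR^d/(S_{d_1} \times \cdots \times S_{d_n})$, factors as $h = \tilde{h} \circ \pi$ with $\tilde{h}$ a bijection from $\pi(\Omega)$ onto a compact $\Omega' \subset \RR^d$ whose inverse is continuous.

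Next, I would import the two regularity ingredients used in Theorem \ref{thm:approx-main}. Applying Theorem 1.3.1 from \cite{Rahman2002AnalyticTO} on each block, $(\tilde{h}^{(i)})^{-1}$ is $1/d_i$-H\"older on its image; taking the product yields that $\tilde{h}^{-1}$ is $1/\max_i d_i$-H\"older (equivalently, $1/d$-H\"older, which is enough). Choose then a partition of unity $(\phi_j)_{j=1}^N$ subordinated to a mesh $(S_j)_j$ of $\Omega'$ with centres $(y_j)_j$ and mesh size $O(N^{-1/d})$, and set
\begin{equation*}
g(y) \eqdef (\phi_1(y), \ldots, \phi_N(y)), \qquad
f(a_1, \ldots, a_N) \eqdef \Ff\Bigl(\textstyle\sum_{j=1}^N a_j \, \delta_{\tilde{h}^{-1}(y_j)}\Bigr).
\end{equation*}
With this construction, $\EE \circ g$ acting on $h_\sharp\al$ returns the vector of masses $a_j = \int_{S_j} \phi_j \, \d(h_\sharp \al)$, so that $f \circ \EE \circ g(h_\sharp \al) = \Ff(\tilde{h}^{-1}_\sharp \widehat{h_\sharp \al})$ with $\widehat{h_\sharp \al} = \sum_j a_j \delta_{y_j}$.

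To conclude I would chain three estimates exactly as in Theorem \ref{thm:approx-main}: (i) Lemma \ref{lem-old} gives $\Wass_1(h_\sharp \al, \widehat{h_\sharp \al}) \le C_1 N^{-1/d}$ uniformly in $\al$; (ii) Lemma \ref{lem-holder} together with H\"older regularity of $\tilde{h}^{-1}$ gives $\Wass_1(\tilde{h}^{-1}_\sharp h_\sharp \al, \tilde{h}^{-1}_\sharp \widehat{h_\sharp \al}) \le C_2 N^{-1/d^2}$, and since $\tilde{h}^{-1}_\sharp h_\sharp \al \sim \al$ in the quotient this upper bound also controls $\overline{\Wass}_1(\al, \tilde{h}^{-1}_\sharp \widehat{h_\sharp \al})$; (iii) $\MMO$ is weak-$*$ compact by Banach--Alaoglu and $\Ff$ is continuous, hence uniformly continuous for $\overline{\Wass}_1$ on $\MMOq$, so picking $N$ large enough forces $|\Ff(\al) - f \circ \EE \circ g(h_\sharp \al)| < \epsilon$ uniformly over $\al$.

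The main obstacle I anticipate is step (ii): one has to ensure that the blockwise H\"older estimate on $\tilde{h}^{-1}$, derived coordinatewise from \cite{Rahman2002AnalyticTO}, still yields a global H\"older bound on the product space compatible with the Euclidean metric used in $\Wass_1$, and that the identification of $\tilde{h}^{-1}_\sharp h_\sharp \al$ with $\al$ in the quotient $\MMOq$ is legitimate under the product group rather than $S_d$. Both points reduce to the fact that $h$ separates orbits of $S_{d_1} \times \cdots \times S_{d_n}$, which follows from the scalar result applied on each block; the rest is a routine tensorization of the proof of Theorem \ref{thm:approx-main}.
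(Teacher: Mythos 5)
Your proposal follows essentially the same route as the paper: blockwise elementary symmetric polynomials for $h$, the standard chain of estimates from Theorem~\ref{thm:approx-main}, and the two adaptations you flag as "the main obstacle" — that $h$ generates the ring of product-group invariants and that the concatenated inverse $\tilde h^{-1}$ is $1/\max_i d_i$-Hölder — are precisely the paper's Lemma~\ref{lem:productring} and Lemma~\ref{lem:concatholder}. You correctly identify the exponent $1/\max_i d_i$ and note that it suffices; the only thing you leave implicit (and the paper spells out) is the normalization/diameter argument showing that concatenating Hölder blocks stays Hölder on a compact domain, but your sketch points in the right direction.
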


\begin{proof} We provide a proof in the case $G=S_d\times S_p$, which naturally extends to any product group $G=S_{d_1}\times\hdots\times S_{d_n}$. We trade $h$ for the collection of elementary symmetric polynomials in the first $d$ variables; and in the last $p$ variables: $h:(x_1,\hdots,x_d,y_1,\hdots,y_p) \in \RR^{d+p}\mapsto( [ \sum_{1 \leq j_1 < \hdots < j_i \leq d} x_{j_1} \hdots x_{j_i}  ]_{i=1}^d; [ \sum_{1 \leq j_1 < \hdots < j_i \leq p} y_{j_1}  \hdots y_{j_i}  ]_{i=1}^p  ) \in \RR^{d+p}$ up to normalizing constants (see Lemma \ref{lem:concatholder}). Step 1 (in Lemma \ref{lem:productring}) consists in showing that $h$ does not lead to a loss of information, in the sense that it generates the ring of $S_d\times S_p-$invariant polynomials. In step 2 (in Lemma \ref{lem:concatholder}), we show that $\Tilde{h}^{-1}$ is $1/\max(d,p)-$Hölder. Combined with the proof of Theorem \ref{thm:approx-main}, this amounts to showing that the concatenation of Hölder functions (up to normalizing constants) is Hölder. With these ingredients, the sketch of the previous proof yields the result.\end{proof}

\begin{lem} \label{lem:productring}
Let the collection of symmetric invariant polynomials $[P_i(X_1,\hdots,X_d)]_{i=1}^d \eqdef [ \sum_{1 \leq j_1 < \hdots < j_i \leq d} X_{j_1} \hdots X_{j_i}  ]_{i=1}^d$ and $[Q_i(Y_1,\hdots,Y_p)]_{i=1}^p=[ \sum_{1 \leq j_1 < \hdots < j_i \leq p} Y_{j_1}  \hdots Y_{j_i}  ]_{i=1}^p$. The $d+p-$sized family $(P_1,\hdots,P_d,Q_1,\hdots,Q_p)$ generates the ring of $S_d\times S_p-$invariant polynomials.
\end{lem}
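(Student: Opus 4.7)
The plan is to apply the classical fundamental theorem of symmetric polynomials (FTSP) twice in succession, exploiting the product structure of $S_d \times S_p$. Throughout, let $R \in \mathbb{C}[X_1,\dots,X_d,Y_1,\dots,Y_p]$ be an arbitrary $S_d\times S_p$-invariant polynomial; the goal is to rewrite $R$ as a polynomial in $P_1,\dots,P_d,Q_1,\dots,Q_p$.

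First, view $R$ as a polynomial in the $X$-variables with coefficients in the ring $A \eqdef \mathbb{C}[Y_1,\dots,Y_p]$. Since $S_d$ acts trivially on $A$, the hypothesis that $R$ is fixed by $S_d$ (acting on the $X$-variables) means that $R$ is an $S_d$-invariant element of $A[X_1,\dots,X_d]$. Applying FTSP over the base ring $A$ (the standard proof goes through verbatim for any commutative base ring containing $\mathbb{Q}$), there exists a polynomial $\Phi \in A[T_1,\dots,T_d]$ such that $R = \Phi(P_1,\dots,P_d)$. Concretely, $\Phi$ takes the form $\Phi(T_1,\dots,T_d) = \sum_{\alpha} c_\alpha(Y_1,\dots,Y_p)\, T^\alpha$ where the sum runs over finitely many multi-indices $\alpha$ and each $c_\alpha \in A$.

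Next, exploit invariance under $S_p$. For $\sigma \in S_p$ acting on the $Y$-variables, $\sigma \cdot R = R$, while $\sigma$ fixes each $P_i$ (which does not involve the $Y$'s). Hence $\sum_{\alpha} \sigma(c_\alpha)\, P^{\alpha} = \sum_{\alpha} c_\alpha\, P^{\alpha}$. Because $P_1,\dots,P_d$ are algebraically independent over $\mathbb{C}$ (a standard corollary of FTSP), this identity in $\mathbb{C}[X_1,\dots,X_d,Y_1,\dots,Y_p]$ forces $\sigma(c_\alpha) = c_\alpha$ for every $\alpha$. Thus each coefficient $c_\alpha \in A$ is itself $S_p$-invariant. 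Applying FTSP a second time, now over $\mathbb{C}$ to the ring $A^{S_p}$, each $c_\alpha$ can be written as a polynomial in $Q_1,\dots,Q_p$. Substituting these expressions into $\Phi$ exhibits $R$ as a polynomial in $P_1,\dots,P_d,Q_1,\dots,Q_p$, as desired.

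The whole argument is an iterated application of a classical theorem; the only subtlety is step two, where one must justify that the $A$-coefficients produced by FTSP in the $X$-variables really are $S_p$-invariant. This uses the algebraic independence of $P_1,\dots,P_d$, which is precisely what guarantees the uniqueness of the representation $R = \Phi(P_1,\dots,P_d)$ and hence allows one to transfer the $S_p$-invariance from $R$ to each coefficient $c_\alpha$ separately. The argument extends unchanged to an arbitrary finite product $S_{d_1} \times \cdots \times S_{d_n}$ by iterating FTSP $n$ times, alternating the role of base ring and polynomial ring.
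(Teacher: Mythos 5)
Your proof is correct and follows essentially the same two-step application of the fundamental theorem of symmetric polynomials as the paper. The only structural difference lies in the second step: the paper re-reads $P = R(Y_1,\dots,Y_p)(P_1,\dots,P_d)$ as an $S_p$-invariant element of $(\mathbb{R}[P_1,\dots,P_d])[Y_1,\dots,Y_p]$ and applies the theorem once more over the base ring $\mathbb{R}[P_1,\dots,P_d]$, whereas you descend to the coefficient level, invoking the algebraic independence of $P_1,\dots,P_d$ to conclude that each $c_\alpha \in \mathbb{R}[Y]$ is individually $S_p$-fixed, and then apply the theorem to each $c_\alpha$ separately. The paper's route is slightly more economical (no explicit appeal to algebraic independence is needed, since $S_p$-invariance of $P$ over the new base ring is immediate), while yours makes explicit the uniqueness-of-representation argument that the paper silently relies on. Both are standard and equivalent; the choice of $\mathbb{C}$ vs.\ $\mathbb{R}$ as ground field is immaterial since the theorem holds over any commutative base ring.
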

\begin{proof}
The result comes from the fact the fundamental theorem of symmetric polynomials (see \cite{cox2007ideals} chapter 7, theorem 3) does not depend on the base field. Every $S_d\times S_p-$invariant polynomial $P(X_1,\hdots,X_d,Y_1,\hdots,Y_p)$ is also $S_d\times I_p-$invariant with coefficients in $\RR[Y_1,\hdots,Y_p]$, hence it can be written $P=R(Y_1,\hdots,Y_p)(P_1,\hdots,P_d)$. It is then also $S_p-$invariant with coefficients in $\RR[P_1,\hdots,P_d]$, hence it can be written $P=S(Q_1,\hdots,Q_p)(P_1,\hdots,P_d) \in \RR[P_1,\hdots,P_d,Q_1,\hdots,Q_p]$.
\end{proof}

\begin{lem} \label{lem:concatholder}
Let $h:(x,y)\in\Omega\subset\RR^{d+p}\mapsto(f(x)/C_1,g(y)/C_2)\in\RR^{d+p}$ where $\Omega$ is compact, $f:\RR^d\to\RR^d$ is $1/d-$Hölder with constant $C_1$ and $g:\RR^p\to\RR^p$ is $1/p-$Hölder with constant $C_2$. Then $h$ is $1/\max(d,p)-$Hölder.
\end{lem}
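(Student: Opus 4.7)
The plan is to combine the two given H\"older estimates on the components of $h$ into a single exponent $1/\max(d,p)$, using the compactness of $\Omega$ to absorb the mismatch between the exponents $1/d$ and $1/p$. Without loss of generality I would assume $d \geq p$, so that $\max(d,p) = d$ and $1/p \geq 1/d$; the argument is symmetric in the opposite case.

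Fixing $(x,y), (x',y') \in \Omega$ and invoking the H\"older bounds on $f$ and $g$ (with the normalizations by $C_1$, $C_2$ absorbing the multiplicative constants) yields
\begin{equation*}
\|h(x,y) - h(x',y')\|_2^2 \;\leq\; \|x-x'\|_2^{2/d} + \|y-y'\|_2^{2/p}.
\end{equation*}
From here the task is to upgrade the second exponent from $2/p$ down to $2/d$, and then to reassemble the two terms into a single Euclidean norm raised to the power $2/d$. The first reduction is where compactness enters: since $\Omega$ is bounded there exists $D > 0$ with $\|y-y'\|_2 \leq D$, so writing $\|y-y'\|_2^{2/p} = \|y-y'\|_2^{2/d}\cdot\|y-y'\|_2^{2/p - 2/d}$ and bounding the second factor by the constant $D^{2/p-2/d}$ (legitimate since $2/p - 2/d \geq 0$) puts both terms at the common exponent $2/d$. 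The reassembly then follows from the elementary inequality $a^{2/d}+b^{2/d} \leq 2(a^2+b^2)^{1/d}$, which is immediate from $a^2, b^2 \leq a^2+b^2$ raised to the power $1/d \in (0,1]$, and rewrites the right-hand side as a constant times $\|(x,y)-(x',y')\|_2^{2/d}$. Taking a square root yields the announced $1/\max(d,p)$-H\"older bound.

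The only mildly delicate point is the exponent-reduction step, and the compactness of $\Omega$ is essential there: without boundedness, $t^{1/p}$ cannot be controlled by $t^{1/d}$ when $d > p$ (for large $t$), and the statement would genuinely fail. Under the given hypotheses, however, the three steps above are elementary and the proof concludes without further obstacle; the resulting H\"older constant depends on $C_1$, $C_2$, $d$, $p$, and the diameter of $\Omega$.
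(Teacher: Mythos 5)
Your proof is correct and follows essentially the same strategy as the paper's: split the increment of $h$ into the $f$- and $g$-contributions, use the bounded diameter of $\Omega$ to downgrade the $1/p$-exponent to the common exponent $1/\max(d,p)$, then reassemble the two terms with a power-mean type inequality. The only differences are cosmetic — you work in $\ell^2$ and square throughout rather than in $\ell^1$, and your reassembly step uses plain monotonicity of $t\mapsto t^{1/d}$ in place of the paper's Jensen/concavity bound, which yields a slightly cruder but equally valid constant.
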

\begin{proof}
Without loss of generality, we consider $d>p$ so that $\max(d,p)=d$, and $f,g$ normalized (f.i. $\forall x,x_0 \in (\RR^d)^2, \norm{f(x)-f(x_0)}_1\leq \norm{x-x_0}_1^{1/d}$). For $(x,y),(x_0,y_0) \in \Omega^2$, $\norm{h(x,y)-h(x_0,y_0)}_1\leq\norm{f(x)-f(x_0)}_1+\norm{g(y)-g(y_0)}_1\leq \norm{x-x_0}_1^{1/d}+\norm{y-y_0}_1^{1/p}$ since both $f,g$ are Hölder. We denote $D$ the diameter of $\Omega$, such that both $\norm{x-x_0}_1/D\leq 1$ and $\norm{y-y_0}_1/D\leq 1$ hold. Therefore $\norm{h(x,y)-h(x_0,y_0)}_1 \leq D^{1/d} \left(\frac{\norm{x-x_0}_1}{D}\right)^{1/d} + D^{1/p} \left(\frac{\norm{y-y_0}_1}{D}\right)^{1/p} \leq 2^{1-1/d}D^{1/p-1/d}\norm{(x,y)-(x_0,y_0)}_1^{1/d}$ using Jensen's inequality, hence the result.
\end{proof}

In the next two paragraphs, we focus the case of $S_d-$invariant functions for the sake of clarity, without loss of generality. Indeed, the same technique applies to $\G-$invariant functions as $h$ in that case has the same structure: its first $d_X$ components are $S_{d_X}-$invariant functions of the first $d_X$ variables and its last $d_Y$ components are $S_{d_Y}-$invariant functions of the last variables.

\paragraph{Extension of Theorem \ref{thm:approx-main} to distributions on spaces of varying dimension.}
\begin{cor} \label{cor:uniform}
    Let $I=[0;1]$ and, for $k \in [1;d_m], \Ff_k:\MMIk{}\to\RR$ continuous and $S_k-$invariant. Suppose $(\Ff_k)_{k=1\hdots d_m-1}$ are restrictions of $\Ff_{d_m}$, namely, $\forall \al_k \in \MMIk{}, \Ff_k(\al_k)=\Ff_{d_m}(\al_k\otimes\delta_0^{\otimes d_m-k})$. Then functions $f$ and $g$ from Theorem \ref{thm:approx-main} are uniform: there exists $f,g$ continuous, $h_1,\hdots,h_{d_m}$ continuous invariant such that $$\forall k=1\hdots d_m, \forall \al_k \in \MMIk{}, \lvert \Ff_k(\al_k)-f\circ \EE \circ g ({h_k}_{\sharp}\al_k) \rvert < \epsilon.$$
\end{cor}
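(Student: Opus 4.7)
The plan is to apply Theorem \ref{thm:approx-main} once, to the top-dimensional functional $\Ff_{d_m}$ on the compact space $I^{d_m}$, and then transfer the resulting approximation to each $\Ff_k$ by a tensorization trick that exploits the restriction hypothesis.

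First, I would invoke Theorem \ref{thm:approx-main} for $\Ff_{d_m}$: since $I^{d_m}$ is compact and $\Ff_{d_m}$ is $S_{d_m}$-invariant and continuous for the convergence in law, one obtains continuous maps $f$ (the reconstruction/evaluation map built from $\Ff_{d_m}$) and $g$ (the piecewise affine discretization of the image of the elementary-symmetric-polynomial map), together with a continuous $S_{d_m}$-invariant map $h_{d_m}$, such that
\begin{equation*}
\forall \be \in \MMIdm{}, \quad \lvert \Ff_{d_m}(\be) - f\circ\EE\circ g(h_{d_m\sharp}\be) \rvert < \epsilon.
\end{equation*}

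Next, for each $k < d_m$ I would introduce the lift $\iota_k : \al_k \in \MMIk{} \mapsto \al_k \otimes \delta_0^{\otimes d_m-k} \in \MMIdm{}$ and define $h_k(x) \eqdef h_{d_m}(x, 0, \ldots, 0)$ for $x \in I^k$. The map $h_k$ is continuous and $S_k$-invariant, since any $\sigma \in S_k$ extends to a permutation in $S_{d_m}$ fixing the last $d_m - k$ coordinates, under which $h_{d_m}$ is invariant. A direct change of variables gives $h_{k\sharp}\al_k = h_{d_m\sharp}(\iota_k(\al_k))$, and accordingly the weight vectors $\EE \circ g(h_{k\sharp}\al_k)$ and $\EE \circ g(h_{d_m\sharp}(\iota_k(\al_k)))$ coincide.

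Combining these observations with the restriction hypothesis $\Ff_k(\al_k) = \Ff_{d_m}(\iota_k(\al_k))$ then yields, for every $k \in [d_m]$ and every $\al_k \in \MMIk{}$,
\begin{equation*}
\lvert \Ff_k(\al_k) - f\circ\EE\circ g(h_{k\sharp}\al_k) \rvert = \lvert \Ff_{d_m}(\iota_k(\al_k)) - f\circ\EE\circ g(h_{d_m\sharp}(\iota_k(\al_k))) \rvert < \epsilon,
\end{equation*}
with the same $f$ and $g$ for all $k$. The main point to check is that a single discretization $g$ defined on the image $h_{d_m}(I^{d_m})$ simultaneously accommodates every $h_k(I^k)$; this is automatic because $h_k(I^k) \subset h_{d_m}(I^{d_m})$ by construction (evaluating the elementary symmetric polynomials at $(x_1, \ldots, x_k, 0, \ldots, 0)$ simply recovers the $k$-variable elementary symmetric polynomials, padded with $d_m - k$ zeros), so a single grid covers all dimensions at once.
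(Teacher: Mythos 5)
Your proof is correct and takes essentially the same route as the paper: apply Theorem \ref{thm:approx-main} once to $\Ff_{d_m}$, then for $k<d_m$ set $h_k$ to be $h_{d_m}$ precomposed with zero-padding (equivalently, the $k$-variable elementary symmetric polynomials padded with zeros), and conclude via the identity ${h_k}_\sharp\al_k = {h_{d_m}}_\sharp(\al_k\otimes\delta_0^{\otimes d_m-k})$ together with the restriction hypothesis. You supply a couple of small checks the paper leaves implicit (the $S_k$-invariance of $h_k$ and the fact that a single grid $g$ covers every $h_k(I^k)$ because $h_k(I^k)\subset h_{d_m}(I^{d_m})$), but the decomposition and key observations are identical.
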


\begin{proof}
Theorem \ref{thm:approx-main} yields continuous $f,g$ and a continuous invariant $h_{d_m}$ such that $\forall \al \in \MMIdm{}, \lvert \Ff_{d_m} - f\circ \EE\circ g ({h_{d_m}}_\sharp \al) \rvert < \epsilon.$ For $k=1\hdots d_m-1$, we denote $h_k:(x_1,\hdots,x_k)\in\RR^k\mapsto  (( \sum_{1 \leq j_1 < \hdots < j_i \leq k} x^{(j_1)} \cdot \hdots \cdot x^{(j_i)})_{i=1\hdots k},0\hdots,0)\in\RR^{d_m}$. With the hypothesis, for $k=1\hdots d_m-1$, $\al_k \in \MMIk{},$ the fact that ${h_k}_\sharp (\al_k)={h_{d_m}}_\sharp (\al_k\otimes \delta_0^{\otimes d_m-k})$ yields the result.
\end{proof}

\paragraph{Approximation by invariant neural networks.}
Based on theorem \ref{thm:approx-main}, $\Ff$ is uniformly close to $f \circ \mathbb{E} \circ g \circ h$:
\begin{itemize}
\item We approximate $f$ by a neural network $f_\theta:x\in \mathbb{R}^N\mapsto C_1 \lambda(A_1 x + b_1) \in \mathbb{R}$, where $p_1$ is an integer, $A_1 \in \mathbb{R}^{p_1\times N}, C_1 \in \mathbb{R}^{1\times p_1}$ are weights, $b_1 \in \mathbb{R}^{p_1}$ is a bias and $\lambda$ is a non-linearity. 
\item Since each component $\phi_j$ of $\phi=g\circ h$ is permutation-invariant, it has the representation $\phi_j:x=(x_1,\hdots,x_d)\in\RR^d\mapsto \rho_j \left( \sum_{i=1}^d u(x_{i}) \right)$ \cite{zaheer2017} (which is a special case of our layers with a base function only depending on its first argument, see section \ref{inv-sto-lay}), $\rho_j:\mathbb{R}^{d+1}\to\mathbb{R}$, and $u:\mathbb{R}\to\mathbb{R}^{d+1}$ independent of $j$ (see \cite{zaheer2017}, theorem 7). 
\item We can approximate $\rho_j$ and $u$ by neural networks $\rho_{j,\theta}:x\in \mathbb{R}^{d+1}\mapsto C_{2,j} \lambda(A_{2,j} x + b_{2,j}) \in \mathbb{R}$ and $u_{\theta}:x\in \mathbb{R}^d\mapsto C_3 \lambda(A_3 x + b_3) \in \mathbb{R}^{d+1}$, where $p_{2,j}, p_3$ are integers, $A_{2,j} \in \mathbb{R}^{p_{2,j}\times (d+1)}, C_{2,j} \in \mathbb{R}^{1\times p_{2,j}}, A_3 \in \mathbb{R}^{p_3 \times 1}, C_3 \in \mathbb{R}^{(d+1)\times p_3}$ are weights and $b_{2,j} \in \mathbb{R}^{p_{2,j}}, b_3 \in \mathbb{R}^{p_3}$ are biases, and denote $\phi_\theta(x)=(\phi_{j,\theta}(x))_j\eqdef (\rho_{j,\theta}(\sum_{i=1}^d u_\theta(x_i)))_j$. 
\end{itemize}
Indeed, we  upper-bound the difference of interest $\lvert \Ff(\al) - f_\theta \left( \mathbb{E}_{X\sim\al} \left( \phi_\theta(X) \right) \right) \rvert$ by triangular inequality by the sum of three terms:
\begin{itemize}
    \item $\lvert \Ff(\al) - f \left( \mathbb{E}_{X\sim\al} \left( \phi(X) \right) \right) \rvert$
    \item $\lvert f \left( \mathbb{E}_{X\sim\al} \left( \phi(X) \right) \right) - f_\theta \left( \mathbb{E}_{X\sim\al} \left( \phi(X) \right) \right) \rvert$
    \item $\lvert f_\theta \left( \mathbb{E}_{X\sim\al} \left( \phi(X) \right) \right) - f_\theta \left( \mathbb{E}_{X\sim\al} \left( \phi_\theta(X) \right) \right) \rvert$
\end{itemize}
and bound each term by $\frac{\epsilon}{3}$, which yields the result.
The bound on the first term directly comes from theorem \ref{thm:approx-main} and yields a constant $N$ which depends on $\epsilon$. The bound on the second term is a direct application of the universal approximation theorem (UAT) \cite{cybenko1989approximation,leshno1993multilayer}. Indeed, since $\al$ is a probability measure, input values of $f$ lie in a compact subset of $\RR^N$: $\norm{ \int_\Omega g\circ h(x) \mathrm{d}\alpha }_\infty \leq \max_{x\in\Omega} \max_i | g_i \circ h(x) |$, hence the theorem is applicable as long as $\la$ is a nonconstant, bounded and continuous activation function. 
Let us focus on the third term.
Uniform continuity of $f_\theta$ yields the existence of $\delta > 0$ s.t. $\left\lvert \left\lvert u-v \right\rvert \right\rvert_1 < \delta$ implies $\left\lvert f_\theta(u)-f_\theta(v) \right\rvert < \frac{\epsilon}{3}$. Let us apply the UAT: each component $\phi_j$ of $h$ can be approximated by a neural network $\phi_{j,\theta}$. Therefore:
\begin{align*}
&\norm{ \EE_{X \sim \al}  \left( \phi(X) - \phi_\theta(X) \right) }_1 
 \leq \EE_{X \sim \al} \norm{ \phi(X) - \phi_\theta(X) }_1 \leq \sum_{j=1}^N \int_\Omega | \phi_j(x) - \phi_{j,\theta}(x) | \mathrm{d}\al(x)  \\
 & \qquad \leq \sum_{j=1}^N \int_\Omega \lvert \phi_j(x) - \rho_{j,\theta}(\sum_{i=1}^d u(x_i)) \rvert\mathrm{d}\al(x) \\
& \qquad+ \sum_{j=1}^N \int_\Omega \lvert \rho_{j,\theta}(\sum_{i=1}^d u(x_i)) - \rho_{j,\theta}(\sum_{i=1}^d u_\theta(x_i)) \rvert \mathrm{d}\al(x) \\
 & \qquad \leq N\frac{\delta}{2N} + N\frac{\delta}{2N}=\delta
\end{align*}
using the triangular inequality and the fact that $\al$ is a probability measure. The first term is small by UAT on $\rho_j$ while the second also is, by UAT on $u$ and uniform continuity of $\rho_{j,\theta}$. Therefore, by uniform continuity of $f_\theta$, we can conclude.

\paragraph{Universality of tensorization.}

This complementary theorem provides insight into the benefits of tensorization for approximating invariant regression functionals, as long as the test function is invariant.
\begin{thm}\label{thm:sw}
    The algebra \eq{
\Aa_\Omega \eqdef 
\left\{
\Ff:\MMOq{} \to \RR, \exists n \in \NN, \exists \phi : \Omega^n \to \RR \text{ invariant}, \forall \al, \Ff(\al) = \int_{\Omega^n} \phi \mathrm{d}\al^{\otimes n} 
\right\}
}
where $\otimes n$ denotes the $n$-fold tensor product, is dense in $\Cc(\Mm_+^1(\Omega)_{/ \sim}).$
\end{thm}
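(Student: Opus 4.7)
The plan is to apply the Stone--Weierstrass theorem to $\Aa_\Omega$ viewed as a unital subalgebra of $\Cc(\MMOq{})$, where I write $G$ for the finite invariance group (i.e.\ $S_d$ or $\G = S_{d_X} \times S_{d_Y}$). Four items must be checked: (i) $\MMOq{}$ is compact Hausdorff, (ii) every $\Ff \in \Aa_\Omega$ is continuous, (iii) $\Aa_\Omega$ is an algebra containing constants, and (iv) $\Aa_\Omega$ separates points. The first three are essentially structural; I expect (iv) to be the genuine obstacle.

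For (i), Banach--Alaoglu gives weak-$*$ compactness of $\Mm_+^1(\Omega)$ since $\Omega$ is compact, and the finite group $G$ acting continuously by push-forward produces a compact Hausdorff orbit space $\MMOq{}$. For (ii), $\al \mapsto \al^{\otimes n}$ is weakly continuous and $\phi$ is continuous on the compact $\Omega^n$; invariance of $\phi$ under the diagonal $G$-action lets the functional descend to $\MMOq{}$. For (iii), constants come from $\phi \equiv c$; sums $\Ff_1 + \Ff_2$ with kernels on $\Omega^{n_1}, \Omega^{n_2}$ are handled by lifting the lower-order kernel to $\Omega^{\max(n_1,n_2)}$ via dummy variables that each integrate to $1$; products correspond to the tensor kernel $\phi_1 \otimes \phi_2$ on $\Omega^{n_1+n_2}$, still invariant under the diagonal action, with $(\Ff_1 \cdot \Ff_2)(\al) = \int \phi_1 \otimes \phi_2 \, \d\al^{\otimes(n_1+n_2)}$ by Fubini.

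Step (iv) is delicate because a first-order ($n=1$) approach fails: one can exhibit $\al \not\sim \be$ whose push-forwards to $\Omega/G$ coincide, so that every $G$-invariant $f \in \Cc(\Omega)$ satisfies $\int f\,\d\al = \int f\,\d\be$. The remedy is a second Stone--Weierstrass argument, this time on $\Mm_+^1(\Omega)$. Associate to $\al$ the discrete probability measure $\nu_\al \eqdef \frac{1}{|G|}\sum_{\sigma \in G} \de_{\sigma_\sharp \al}$ on the compact Hausdorff space $\Mm_+^1(\Omega)$, and likewise $\nu_\be$. The equivalence $\al \sim \be$ is precisely equality of the orbit multisets $\{\sigma_\sharp \al\}_\sigma = \{\sigma_\sharp \be\}_\sigma$ (both directions are immediate), hence $\al \not\sim \be$ forces $\nu_\al \ne \nu_\be$. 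Now the algebra $\mathcal{B}$ generated by the linear evaluations $\mu \mapsto \langle f, \mu \rangle$ for $f \in \Cc(\Omega)$ contains constants, is closed under products and sums, and separates points of $\Mm_+^1(\Omega)$ trivially by Riesz, so by Stone--Weierstrass it is uniformly dense in $\Cc(\Mm_+^1(\Omega))$. Consequently some element of $\mathcal{B}$, of the form $P(\mu) = \sum_j c_j \prod_{i=1}^{n_j} \langle f_i^{(j)}, \mu \rangle$, satisfies $\int P\,\d\nu_\al \ne \int P\,\d\nu_\be$. Unfolding term by term, $\int \prod_i \langle f_i^{(j)}, \mu \rangle\,\d\nu_\al(\mu) = \frac{1}{|G|}\sum_\sigma \int_{\Omega^{n_j}} \prod_i f_i^{(j)}(\sigma x_i)\,\d\al^{\otimes n_j}(x) = \int_{\Omega^{n_j}} \phi_j\,\d\al^{\otimes n_j}$ where $\phi_j(x_1,\ldots,x_{n_j}) \eqdef \frac{1}{|G|}\sum_\sigma \prod_i f_i^{(j)}(\sigma x_i)$ is $G$-invariant under the diagonal action. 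Each term therefore lies in $\Aa_\Omega$, and the distinguishing functional $\Ff = \sum_j c_j \int \phi_j\,\d\al^{\otimes n_j}$ is in $\Aa_\Omega$ too by its algebra structure established in (iii).

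The principal conceptual hurdle is thus recognizing that orbit separation at the level of individual samples in $\Omega$ is not enough, and reformulating point separation on $\MMOq{}$ as a separation problem for the symmetrized measures $\nu_\al$ on $\Mm_+^1(\Omega)$; once this translation is in place, the conclusion follows from a clean double invocation of Stone--Weierstrass combined with Fubini to realize the separating functional as a sum of elements of $\Aa_\Omega$.
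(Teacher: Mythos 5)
Your proof is correct and follows the paper's overall Stone--Weierstrass strategy (compactness of $\MMOq$ by Banach--Alaoglu, constants with $\phi\equiv 1$, sums by padding with dummy variables, products by tensoring kernels over $\Omega^{n_1+n_2}$ with Fubini). The genuine divergence is in the point-separation step, and here you actually improve on the paper. The paper's argument claims that $\al \not\sim \nu$ forces the existence of a \emph{symmetrized} set $S$ with $\al(S)\neq\nu(S)$, proves this by "if not then $\al(\Omega)=\nu(\Omega)$, absurd" (which is not absurd --- both equal $1$), and then uses $\phi = \mathds{1}_S$ at degree $n=1$. Both steps are problematic: the claim is false (take $G=S_2$, $\al=\tfrac12\delta_{(0,1)}+\tfrac12\delta_{(0.3,0.7)}$, $\be=\tfrac12\delta_{(0,1)}+\tfrac12\delta_{(0.7,0.3)}$; then $\al\not\sim\be$ yet $\al(S)=\be(S)$ for every symmetrized $S$), and $\mathds{1}_S$ is not continuous, so $\al\mapsto\al(S)$ need not lie in $\Cc(\MMOq)$. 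Your observation that the $n=1$ functionals only see $\pi_\sharp\al$ (the push-forward to $\Omega/G$), which is strictly coarser than the orbit relation $\sim$, identifies exactly why the paper's separation argument cannot work.

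Your fix is sound. Encoding the orbit of $\al$ as the discrete measure $\nu_\al=\tfrac{1}{|G|}\sum_\sigma\delta_{\sigma_\sharp\al}$ on the compact space $\Mm_+^1(\Omega)$ turns orbit-equality into measure-equality ($\al\sim\be\iff\nu_\al=\nu_\be$, with both directions verified). Then the inner Stone--Weierstrass on $\Cc(\Mm_+^1(\Omega))$ with the algebra generated by linear evaluations $\mu\mapsto\langle f,\mu\rangle$ gives a polynomial separator, and averaging over $G$ produces kernels $\phi_j(x_1,\dots,x_{n_j})=\tfrac{1}{|G|}\sum_\sigma\prod_i f_i^{(j)}(\sigma x_i)$ that are continuous (unlike $\mathds{1}_S$) and diagonally $G$-invariant, so each term lands in $\Aa_\Omega$ and the full separator follows from the algebra structure. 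This is necessarily a higher-degree argument ($n_j\geq 2$ in general), which is the price one pays for correctness --- and is precisely in the spirit of the theorem's point, namely that tensorization beyond order one is what buys universality.
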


\begin{proof}
This result follows from the Stone-Weierstrass theorem. Since $\Omega$ is compact, by Banach-Alaoglu theorem, we obtain that $\MMO{}$ is weakly-* compact, hence  $\MMOq{}$ also is. In order to apply Stone-Weierstrass, we show that $\Aa_\Omega$ contains a non-zero constant function and is an algebra that separates points. A (non-zero, constant) 1-valued function is obtained with $n=1$ and $\phi=1$. Stability by scalar is straightforward. For stability by sum: given $(\Ff_1,\Ff_2) \in \Aa_\Omega^2$ (with associated  functions $(\phi_1,\phi_2)$ of tensorization degrees $(n_2,n_2)$), we denote $n \eqdef \max(n_1,n_2)$ and $\phi(x_1,\hdots,x_{n}) \eqdef \phi_1(x_1,\hdots,x_{n_1}) + \phi_2(x_1,\hdots,x_{n_2})$ which is indeed invariant, hence $\Ff_1+\Ff_2 =  \int_{\Omega^n} \phi \mathrm{d}\al^{\otimes n} \in \Aa_\Omega$. Similarly, for stability by product: denoting this time $n = n_1+n_2$, we introduce the invariant $\phi(x_1,\hdots,x_{n}) = \phi_1(x_1,\hdots,x_{n_1})\times\phi_2(x_{n_1+1},\ldots,x_{n})$, which shows that $\Ff=\Ff_1 \times \Ff_2 \in \Aa_\Omega$ using Fubini's theorem. Finally, $\mathcal{A}_\Omega$ separates points: if $\al \neq \nu$, then there exists a symmetrized domain $S$ such that $\al(S)\neq\nu(S)$: indeed, if for all symmetrized domains $S$, $\al(S)=\nu(S)$, then $\al(\Omega)=\nu(\Omega)$ which is absurd. Taking $n=1$ and $\phi=1_{S}$ (invariant since $S$ is symmetrized) yields an $\Ff$ such that $\Ff(\al)\neq\Ff(\nu)$.
\end{proof}

\section{Experimental validation, supplementary material}\label{appendix:expe}

\XX\ and DSS source code are provided in the last file of the supplementary material.

\subsection{Benchmark Details}
Three benchmarks are used (Table \ref{tab:dataset}): TOY and UCI, taken from \citep{jomaa2019dataset2vec}, and OpenML CC-18. TOY includes 10,000 datasets, where instances are distributed along mixtures of Gaussian, intertwinning moons and rings in $\RR^2$, with 2 to 7 classes. UCI includes 121 datasets from the UCI Irvine repository \citep{UCI}. Each benchmark is divided into 70\%-30\% training-test sets.

\begin{table}[!ht]
\begin{center}
\scalebox{0.9}{
\begin{tabular}{lccccc}
\hline
            & \multicolumn{1}{l}{\# datasets} & \multicolumn{1}{l}{\# samples} & \multicolumn{1}{l}{\# features} & \multicolumn{1}{l}{\# labels} & test ratio \\ \hline
Toy Dataset & 10000                           & {[}2048, 8192{]}               & 2                               & {[}2, 7{]}   &   0.3             \\ \hline
UCI         & 121                             & {[}10, 130064{]}               & {[}3, 262{]}                    & {[}2, 100{]}     & 0.3            \\ \hline
OpenML CC-18   &  71                               &      [500, 100000]                          &                        [5, 3073]        &       [2, 46] &  0.5   \\ \hline
\end{tabular}}
\caption{Benchmarks characteristics}\label{tab:dataset}
\end{center}
\end{table}

\subsection{Baseline Details}

\paragraph{Dataset2Vec details.} We used the available implementation of \Baseline{}\footnote{\Baseline\ code is available at \url{https://github.com/hadijomaa/dataset2vec}}. In the experiments, \Baseline{} hyperparameters are set to their default values except size and number of patches, set to same values as in \XX.

\paragraph{\textsc{DSS} layer details.} We built our own implementation of invariant \textsc{DSS} layers, as follows.
Linear invariant DSS layers (see \citep{maron2020learning}, Theorem 5, 3.) are of the form \begin{align} \label{eq:maroninv}
    L_{inv}:X \in \RR^{n\times d} \mapsto L^H(\sum_{j=1}^n x_j) \in \RR^K
\end{align}
where $L^H:\RR^d\to\RR^K$ is a linear $H$-invariant function.
Our applicative setting requires that the implementation accommodates to varying input dimensions $d$ as well as permutation invariance, hence we consider the Deep Sets representation (see \citep{zaheer2017}, Theorem 7)
\begin{align} \label{eq:maronH}
    L^H:x=(x_1,\hdots,x_d)\in\RR^d \mapsto \rho \left( \sum_{i=1}^d \phi(x_i) \right) \in \RR^K
\end{align}
where $\phi:\RR\to\RR^{d+1}$ and $\rho:\RR^{d+1}\to\RR^K$ are modelled as (i) purely linear functions; (ii) FC networks, which extends the initial linear setting (\ref{eq:maroninv}).
In our case, $H=S_{d_X}\times S_{d_Y}$, hence, two invariant layers of the form (\ref{eq:maroninv}-\ref{eq:maronH}) are combined to suit both feature- and label-invariance requirements.
Both outputs are concatenated and followed by an FC network to form the DSS meta-features.
The last experiments use \textsc{DSS} equivariant layers (see \citep{maron2020learning}, Theorem 1), which take the form 
\begin{align} \label{eq:maroneq}
    L_{eq}:X \in \RR^{n\times d} \mapsto \left( L^1_{eq}(x_i) + L^2_{eq}(\sum_{j\neq i} x_j) \right)_{i \in [n]} \in \RR^{n\times d}
\end{align}
where $L^1_{eq}$ and $L^2_{eq}$ are linear $H$-equivariant layers.
Similarly, both feature- and label-equivariance requirements are handled via the Deep Sets representation of equivariant functions (see \citep{zaheer2017}, Lemma 3) and concatenated to be followed by an invariant layer, forming the DSS meta-features.
All methods are allocated the same number of parameters to ensure fair comparison.
We provide our implementation of the DSS layers in the supplementary material.

\paragraph{Hand-crafted meta-features.}
For the sake of reproducibility, the list of \mf s used in section \ref{sec:expe} is given in Table  \ref{tab:mf_handcrafted}. Note that
meta-features related to missing values and categorical features are omitted, as being irrelevant for the considered benchmarks. Hand-crafted \mf s are extracted using the BYU \texttt{metalearn} library\footnote{See \url{https://github.com/byu-dml/metalearn}}.

\begin{table}[!ht]
\centering
\scalebox{0.9}{
\begin{tabular}{|l|l|l|l|}
\hline
\textbf{Meta-features} & \textbf{Mean} & \textbf{Min} & \textbf{Max} \\ \hline
Quartile2ClassProbability & 0.500 & 0.75 & 0.25 \\ \hline
MinorityClassSize & 487.423 & 426.000 & 500.000 \\ \hline
Quartile3CardinalityOfNumericFeatures & 224.354 & 0.000 & 976.000 \\ \hline
RatioOfCategoricalFeatures & 0.347 & 0.000 & 1.000 \\ \hline
MeanCardinalityOfCategoricalFeatures & 0.907 & 0.000 & 2.000 \\ \hline
SkewCardinalityOfNumericFeatures & 0.148 & -2.475 & 3.684 \\ \hline
RatioOfMissingValues & 0.001 & 0.000 & 0.250 \\ \hline
MaxCardinalityOfNumericFeatures & 282.461 & 0.000 & 977.000 \\ \hline
Quartile2CardinalityOfNumericFeatures & 185.555 & 0.000 & 976.000 \\ \hline
KurtosisClassProbability & -2.025 & -3.000 & -2.000 \\ \hline
NumberOfNumericFeatures & 3.330 & 0.000 & 30.000 \\ \hline
NumberOfInstancesWithMissingValues & 2.800 & 0.000 & 1000.000 \\ \hline
MaxCardinalityOfCategoricalFeatures & 0.917 & 0.000 & 2.000 \\ \hline
Quartile1CardinalityOfCategoricalFeatures & 0.907 & 0.000 & 2.000 \\ \hline
MajorityClassSize & 512.577 & 500.000 & 574.000 \\ \hline
MinCardinalityOfCategoricalFeatures & 0.879 & 0.000 & 2.000 \\ \hline
Quartile2CardinalityOfCategoricalFeatures & 0.915 & 0.000 & 2.000 \\ \hline
NumberOfCategoricalFeatures & 1.854 & 0.000 & 27.000 \\ \hline
NumberOfFeatures & 5.184 & 4.000 & 30.000 \\ \hline
Dimensionality & 0.005 & 0.004 & 0.030 \\ \hline
SkewCardinalityOfCategoricalFeatures & -0.050 & -4.800 & 0.707 \\ \hline
KurtosisCardinalityOfCategoricalFeatures & -1.244 & -3.000 & 21.040 \\ \hline
StdevCardinalityOfNumericFeatures & 68.127 & 0.000 & 678.823 \\ \hline
StdevClassProbability & 0.018 & 0.000 & 0.105 \\ \hline
KurtosisCardinalityOfNumericFeatures & -1.060 & -3.000 & 12.988 \\ \hline
NumberOfInstances & 1000.000 & 1000.000 & 1000.000 \\ \hline
Quartile3CardinalityOfCategoricalFeatures & 0.916 & 0.000 & 2.000 \\ \hline
NumberOfMissingValues & 2.800 & 0.000 & 1000.000 \\ \hline
Quartile1ClassProbability & 0.494 & 0.463 & 0.500 \\ \hline
StdevCardinalityOfCategoricalFeatures & 0.018 & 0.000 & 0.707 \\ \hline
MeanClassProbability & 0.500 & 0.500 & 0.500 \\ \hline
NumberOfFeaturesWithMissingValues & 0.003 & 0.000 & 1.000 \\ \hline
MaxClassProbability & 0.513 & 0.500 & 0.574 \\ \hline
NumberOfClasses & 2.000 & 2.000 & 2.000 \\ \hline
MeanCardinalityOfNumericFeatures & 197.845 & 0.000 & 976.000 \\ \hline
SkewClassProbability & 0.000 & -0.000 & 0.000 \\ \hline
Quartile3ClassProbability & 0.506 & 0.500 & 0.537 \\ \hline
MinCardinalityOfNumericFeatures & 138.520 & 0.000 & 976.000 \\ \hline
MinClassProbability & 0.487 & 0.426 & 0.500 \\ \hline
RatioOfInstancesWithMissingValues & 0.003 & 0.000 & 1.000 \\ \hline
Quartile1CardinalityOfNumericFeatures & 160.748 & 0.000 & 976.000 \\ \hline
RatioOfNumericFeatures & 0.653 & 0.000 & 1.000 \\ \hline
RatioOfFeaturesWithMissingValues & 0.001 & 0.000 & 0.250 \\ \hline
\end{tabular}}
\caption{Hand-crafted \mf s}
\label{tab:mf_handcrafted}
\end{table}

\subsection{Performance Prediction}

\paragraph{Experimental setting.} Table \ref{tab:hp_space} details all hyper-parameter configurations $\Theta$ considered in Section \ref{subsec:perflearning}.
As said, the learnt meta-features $\Ff_\zeta(\z)$ can be used in a regression setting, predicting the performance of various ML algorithms on a dataset $\z$.
Several performance models have been considered on top of the meta-features learnt in Section \ref{subsec:perflearning}, for instance (i) a BOHAMIANN network \citep{bohamiann2016}; (ii) Random Forest models, trained under a Mean Squared Error loss between predicted and true performances. 

\paragraph{Results.}
Table \ref{tab:MSE_BOH} reports the Mean Squared Error on the test set with performance model BOHAMIANN \citep{bohamiann2016}, comparatively to DSS and hand-crafted ones. Replacing the surrogate model with Random Forest concludes to the same ranking as in Table \ref{tab:MSE_BOH}.
Figure \ref{fig:scatter} complements Table \ref{tab:MSE_BOH} in assessing the learnt \XX{} meta-features for performance model learning. It shows \XX{}'s ability to capture more expressive meta-features than both DSS and hand-crafted ones, for all ML algorithms considered.

\begin{table}
\begin{tabular}{|l|l|c|c|}
\hline
                                                   & \multicolumn{1}{c|}{\textbf{Parameter}} & \textbf{Parameter values}                              & \multicolumn{1}{c|}{\textbf{Scale}} \\ \hline
\multicolumn{1}{|c|}{\multirow{6}{*}{\textbf{LR}}} & warm start                             & True, Fase                                              & \multicolumn{1}{c|}{}               \\ \cline{2-4} 
\multicolumn{1}{|c|}{}                             & fit intercept                          & True, Fase                                              & \multicolumn{1}{c|}{}               \\ \cline{2-4} 
\multicolumn{1}{|c|}{}                             & tol                                    & {[}0.00001, 0.0001{]}                                   & \multicolumn{1}{c|}{}               \\ \cline{2-4} 
\multicolumn{1}{|c|}{}                             & C                                      & {[}1e-4, 1e4{]}                                         & \multicolumn{1}{c|}{log}            \\ \cline{2-4} 
\multicolumn{1}{|c|}{}                             & solver                                 & newton-cg, lbfgs, liblinear, sag, saga                  & \multicolumn{1}{c|}{}               \\ \cline{2-4} 
\multicolumn{1}{|c|}{}                             & max\_iter                              & {[}5, 1000{]}                                           & \multicolumn{1}{c|}{}               \\ \hline
\multirow{7}{*}{\textbf{SVM}}                      & kernel                                 & linear, rbf, poly, sigmoid                              & \multicolumn{1}{c|}{}               \\ \cline{2-4} 
                                                   & C                                      & {[}0.0001, 10000{]}                                     & log                                 \\ \cline{2-4} 
                                                   & shrinking                              & True, False                                             &                                     \\ \cline{2-4} 
                                                   & degree                                 & {[}1, 5{]}                                              &                                     \\ \cline{2-4} 
                                                   & coef0                                  & {[}0, 10{]}                                             &                                     \\ \cline{2-4} 
                                                   & gamma                                  & {[}0.0001, 8{]}                                         &                                     \\ \cline{2-4} 
                                                   & max\_iter                              & {[}5, 1000{]}                                           &                                     \\ \hline
\multirow{3}{*}{\textbf{KNN}}                      & n\_neighbors                           & {[}1, 100{]}                                            & log                                 \\ \cline{2-4} 
                                                   & p                                      & {[}1, 2{]}                                              &                                     \\ \cline{2-4} 
                                                   & weights                                & uniform, distance                                       &                                     \\ \hline
\multirow{11}{*}{\textbf{SGD}}                     & alpha                                  & {[}0.1,  0.0001{]}                                      & log                                 \\ \cline{2-4} 
                                                   & average                                & True, False                                             &                                     \\ \cline{2-4} 
                                                   & fit\_intercept                         & True, False                                             &                                     \\ \cline{2-4} 
                                                   & learning rate                          & optimal, invscaling, constant                           &                                     \\ \cline{2-4} 
                                                   & loss                                   & hinge, log, modified\_huber, squared\_hinge, perceptron &                                     \\ \cline{2-4} 
                                                   & penalty                                & l1, l2, elasticnet                                      &                                     \\ \cline{2-4} 
                                                   & tol                                    & {[}1e-05, 0.1{]}                                        & log                                 \\ \cline{2-4} 
                                                   & eta0                                   & {[}1e-7, 0.1{]}                                         & log                                 \\ \cline{2-4} 
                                                   & power\_t                               & {[}1e-05, 0.1{]}                                        & log                                 \\ \cline{2-4} 
                                                   & epsilon                                & {[}1e-05, 0.1{]}                                        & log                                 \\ \cline{2-4} 
                                                   & l1\_ratio                              & {[}1e-05, 0.1{]}                                        & log                                 \\ \hline
\end{tabular}
\caption{Hyper-parameter configurations considered in Section \ref{subsec:perflearning}.} \label{tab:hp_space}
\end{table}

\begin{table}
   \centering
    \resizebox{\columnwidth}{!}{%
   \begin{tabular}{|c|c|c|c|c|}\toprule
     Method      & SGD  &  SVM  &   LR  & KNN \\ \midrule
     Hand-crafted & 0.016 $\pm$ 0.001 & 0.021 $\pm$ 0.001 & 0.018 $\pm$ 0.002 & 0.034 $\pm$ 0.001 \\ \midrule
     
     \textsc{DSS} (Linear aggregation) & 0.015 $\pm$ 0.007 & 0.020 $\pm$ 0.002 & 0.019 $\pm$ 0.001 & 0.025 $\pm$ 0.010 \\
     \textsc{DSS} (Equivariant+Invariant) & 0.014 $\pm$ 0.002 & 0.017 $\pm$ 0.003 & 0.015 $\pm$ 0.003 & 0.028 $\pm$ 0.003 \\
     \textsc{DSS} (Non-linear aggregation) & 0.015 $\pm$ 0.009 & 0.016 $\pm$ 0.003 & 0.014 $\pm$ 0.001 & 0.020 $\pm$ 0.005 \\ \midrule
     
     \textsc{DIDA} & \textbf{0.012} $\pm$ \textbf{0.001} & \textbf{0.015} $\pm$ \textbf{0.001} & \textbf{0.010} $\pm$ \textbf{0.001} & \textbf{0.009} $\pm$ \textbf{0.000} \\

    \bottomrule 
   \end{tabular}
   }
   \caption{Performance modelling, comparative results of \XX{}, DSS and Hand-crafted (HC) \mf s: Mean Squared Error (average over 5 runs) on test set, between the true performance and the performance predicted by the trained BOHAMIANN surrogate model, for  ML algorithms SVM, LR, kNN, SGD (see text).}
   \label{tab:MSE_BOH}
\end{table}

\begin{figure}
\centering
\subfigure[]{\includegraphics[width=0.9\textwidth]{knn_dida_dss_hc.png}}
\subfigure[]{\includegraphics[width=0.9\textwidth]{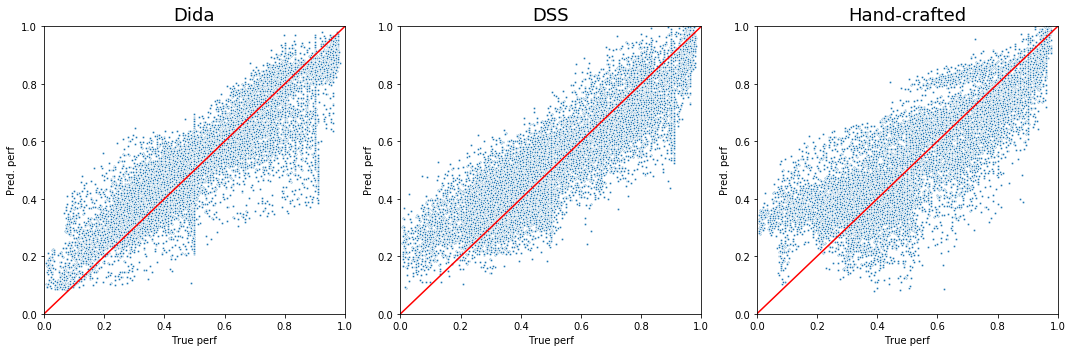}}
\subfigure[]{\includegraphics[width=0.9\textwidth]{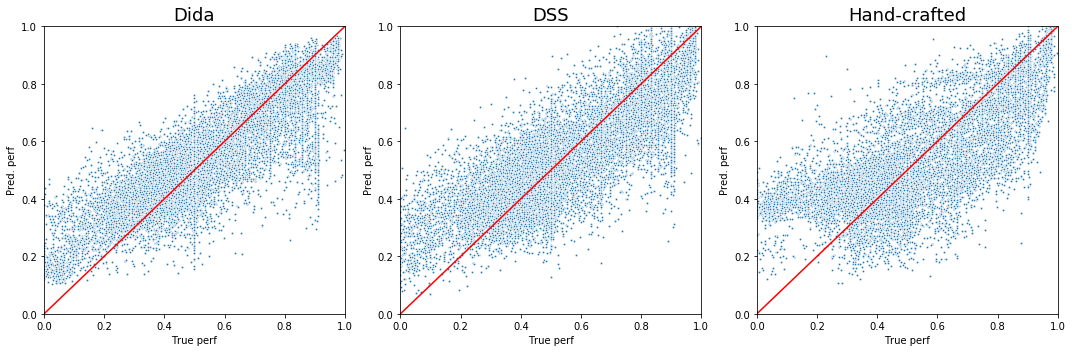}}
\subfigure[]{\includegraphics[width=0.9\textwidth]{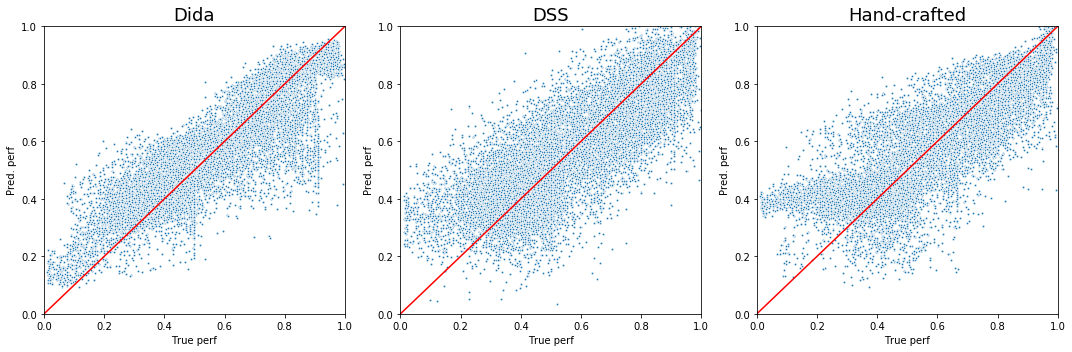}}
\caption{Comparison between the true performance and the performance predicted by the trained surrogate model on \XX{}, DSS or Hand-crafted \mf s, for various ML algorithms: (a) k-NN; (b) Logistic Regression; (c) SVM; (d) Linear classifier learnt with stochastic gradient descent.\label{fig:scatter}}
\end{figure}

\end{document}